\newtheorem{theorem}{Theorem}
\newtheorem{lemma}[theorem]{Lemma}
\newtheorem{assumption}{Assumption}
\title{Intermittent Pulling with Local Compensation for Communication-Efficient Federated Learning}
\author[1,2]{Haozhao Wang}
\author[2,3]{Zhihao Qu}
\author[2]{Song Guo}
\author[1]{Xin Gao}
\author[1]{Ruixuan Li}
\author[4]{Baoliu Ye}
\affil[1]{School of Computer Science and Technology, \authorcr \textit{Huazhong University of Science and Technology}}
\affil[2]{Department of Computing, The Hong Kong Polytechnic University}
\affil[3]{College of Computer and Information, Hohai University}
\affil[4]{Department of Computer Science and Engineering, Nanjing University \authorcr \{hz\_wang, emgaox, rxli\}@hust.edu.cn, song.guo@polyu.edu.hk, quzhihao@hhu.edu.cn, yebl@nju.edu.cn}
\date{} 
\begin{document}

\maketitle
\begin{abstract}
    Federated Learning is a powerful machine learning paradigm to cooperatively train a global model with highly distributed data. A major bottleneck on the performance of distributed Stochastic Gradient Descent (SGD) algorithm for large-scale Federated Learning is the communication overhead on pushing local gradients and pulling global model. In this paper, to reduce the communication complexity of Federated Learning, a novel approach named Pulling Reduction with Local Compensation (PRLC) is proposed. Specifically, each training node intermittently pulls the global model from the server in SGD iterations, resulting in that it is sometimes unsynchronized with the server. In such a case, it will use its local update to compensate the gap between the local model and the global model. Our rigorous theoretical analysis of PRLC achieves two important findings. First, we prove that the convergence rate of PRLC preserves the same order as the classical synchronous SGD for both strongly-convex and non-convex cases with good scalability due to the linear speedup with respect to the number of training nodes. Second, we show that PRLC admits lower pulling frequency than the existing pulling reduction method without local compensation. We also conduct extensive experiments on various machine learning models to validate our theoretical results. Experimental results show that our approach achieves a significant pulling reduction over the state-of-the-art methods, e.g., PRLC requiring only half of the pulling operations of LAG. 
\end{abstract}

\section{Introduction}
The explosion of data and rapid increase in model size have led to great attention to distributed 
machine learning approaches. Recently, Federated Learning has been proposed to enable a large number of 
workers, e.g., phones, tablets, and sensors, to cooperatively train a global model without exposing 
their own data. The most widely used training algorithm is distributed Stochastic Gradient Descent (SGD), 
where individual workers iteratively refresh the global model located on the server via operations of 
pushing gradients to and pulling model from the server. The huge communication overhead imposes a bottleneck on the performance of Federated Learning.

Recently the mainstream to reduce the communication overhead is in two orthogonal lines: compressing the transmission data and reducing the pushing/pulling operations. Although data compression based methods, e.g., sparsification~\cite{DBLP:conf/aaai/ChenCBAZG18,shi2019convergence} 
and quantization~\cite{alistarh2017qsgd,wen2017terngrad}, have shown effectiveness in reducing transmission bits, the cost of other communication overhead,  
e.g., searching servers, queuing and propagating messages in the network, could not be ignored~\cite{wang2019scalable}. Moreover, the transmission energy of low-power wireless devices is often dominated by activation of communication modules and less so by the actual signal amplitude or feature output dimension~\cite{zhu2019cost}. 
To this end, it is of great significance to investigate the approach of reducing the number of pushing/pulling operations.

To reduce the pushing operations, Wang et al.~\cite{luping2019cmfl} propose excluding the workers with outlier updates deviated from the average update of all workers.
However, their method could not be used to reduce the pulling operations because the global model is unique and has no outliers. 
Recently, Chen et al. propose an adaptive algorithm LAG to reduce the pulling operations, in which the outdated model and gradient are reused and thus the workers do not need to pull the global model in some specific iterations~\cite{chen2018lag}. However, LAG requires the gradient varying slowly. This assumption holds only when the Batch Gradient Descent based algorithms. Our experiment shows that it indeed does not help much for SGD.

In this paper, we propose a novel method to reduce the communication overhead of SGD-based Federated Learning named Pulling Reduction with Local Compensation (PRLC). The main idea is that workers intermittently pull global model from the server and when they decide not to pull, they compensate the gap with the local updates. Specifically, the local update is utilized to approximate the average update of all workers which is used to compensate the gap between local model and global model. We prove that our method yields smaller pulling ratio (proportion of workers that synchronizes with the global model) than the pulling rounds reduction method without local compensation. Furthermore, our method largely improves the training speed. We also theoretically show that our method has better scalability than the popular method Asynchronous SGD (ASGD)~\cite{dean2012large,ho2013more}, and experimentally present that our method outperforms ASGD in terms of the convergence time.

Our contributions are summarized as follows:
\begin{itemize}
    \item We propose a novel PRLC method that only a portion of workers pull global model from servers in each iteration and those not pulling use their local updates to compensate the gap.
    \item To the best of our knowledge, we are the first analyzing the convergence rate of distributed SGD with pulling reduction and local compensation in both strongly-convex and non-convex cases. The theoretical results show that the asymptotic convergence rate of our method is in the same order as the non-compression method.
    \item We provide rigorous analysis to show the advantage of local compensation in PRLC. It yields a lower pulling ratio than the reduction method without local compensation. In addition, PRLC has better scalability than ASGD due to the linear speedup with respect to the number of training nodes.
    \item We conduct extensive experiments in various machine learning tasks including both convex and non-convex models. The results show that our proposed method achieves significant improvement.
\end{itemize}

This paper is organized as follows. Section~\ref{Section:RelatedWork} presents the related work. Then, the preliminaries about Federated Learning are introduced in Section~\ref{Section:Preliminary}. After that, we propose PRLC and analyze its convergence rate in Section~\ref{Section:MethodologyAndAnalysis}. In Section~\ref{Section:Evaluations}, experiments are performed to show the efficiency of our method. Finally, the conclusions are drawn in Section~\ref{Section:Conclusion}.

\section{Related Work}\label{Section:RelatedWork}
Federated Learning can be viewed as a special case of distributed machine learning with guaranteed data privacy of each worker. Many works have been proposed to improve its efficiency. Federated Learning can be viewed as a special case of distributed machine learning with protecting the data privacy of each worker. Many works have been proposed to improve its efficiency. 

To reduce the size of communication data, compression-based methods, e.g., gradient sparsification and quantization, have been proposed recently. Gradient sparsification~\cite{Yujun,alistarh2018convergence} reduces the communication cost by only transmitting a portion of the dimensions of the gradient. AdaComp~\cite{DBLP:conf/aaai/ChenCBAZG18} proposes a dynamic strategy for the selection of dimensions. Xiao et al.~\cite{xiao2017fast} sample dimensions of the gradient to accelerate the training process. Gradient quantization~\cite{DBLP:journals/corr/ZhouNZWWZ16,wen2017terngrad} quantizes the value of gradient from 32-bit float number to some lower bit representation with lower precision. QSGD~\cite{alistarh2017qsgd} is a general framework for quantization, in which the relationship between the quantized level and the convergence rate has been established. 

The asynchronous distributed learning algorithm has been proposed to improve the computation efficiency over the synchronous method, of which the key concept is to improve the hardware computing efficiency by sacrificing some convergence rate~\cite{dean2012large,gu2018accelerated}. Another way is to overlap the computation and communication to reduce the run time~\cite{wang2019scalable,li2018pipe}. Shen et al.~\cite{shen2019faster} further improve the computation efficiency by overlapping the communication with multiple local computation steps. Though these methods have made great progress in reducing the training time, communication overhead is not alleviated.

There are also some methods that improe convergence efficiency with less iterations. These methods could also reduce the communication overhead as by-product. Newton method~\cite{shamir2014communication,wang2018giant} 
uses the second-order information to speed up training and thus incurs less the communication rounds. 
However, these methods are easily trapped to the saddle point \cite{ge2015escaping,jin2017escape}. Alternatively, some other methods use large batch size with a large learning rate~\cite{goyal2017accurate,you2018imagenet} to accelerate convergence, 
but they do not have theoretical guarantees and may result in accuracy loss~\cite{yin2018gradient}. 

Recently, the pushing/pulling operations reduction methods are proposed to specifically reduce the communication overhead with the convergence preserved. Wang et al.~\cite{luping2019cmfl} 
propose reducing the number of pushing communication operations by excluding the updates of some workers, but their method could not be applied to reducing the number of pulling communication operations. Recently LAG \cite{chen2018lag} and its variant LAQ~\cite{sun2019communication} are recently proposed to reduce the number of pulling communication operations by adaptively reusing the outdated gradients. However, reusing outdated gradients only holds in BGD of which the gradients vary little. Consequently, there is still a deficiency for pulling operations reduction for SGD which is solved by our work.

\section{Preliminaries}\label{Section:Preliminary}
In this paper, we seek to solve the sum optimization problem which is general in the machine learning field:
\begin{equation}
    \label{eq:Objective}
    \mathop{argmin}\limits_{\omega} F(\omega) = \frac{1}{N} \sum\limits_{i=1}^{N}f(\omega).
\end{equation}
where $\omega \in \mathcal{R}^d$ is the model and $N$ is the number of samples.

Throughout this paper, we use $\|\omega\|$ to denote the $L_2$ norm of vector $\omega$, $w^*$ to denote the optimal parameter, $\nabla F(\omega)$ to denote the full gradient with respect to $\omega$, and $g(\omega; \xi )$ to denote one stochastic gradient with respect to a mini-batch $\xi$. 

We make the following assumptions to the objective function. These assumptions are commonly used in 
stochastic optimization in both convex and non-convex cases
\cite{DBLP:journals/siamrev/BottouCN18,shi2019flexible,lian2015asynchronous}.

\begin{assumption}
\emph{($L$-smooth function)} The objective function $F$ is $L$-smooth with Lipschitz constant $L>0$, $\|\nabla F(\omega_1)-\nabla F(\omega_2)\| \leq L\|\omega_1-\omega_2\|$, which indicates that
\begin{equation*}
F(\omega_{2}) - F(\omega_1) \le \nabla F(\omega_1)(\omega_{2}-\omega_1)^T + \frac{L}{2}\|\omega_{2}-\omega_1\|^2
\end{equation*}
\end{assumption}

\begin{assumption}
(\emph{bounded value}) In the sequence of iterations, the global parameters $\omega_1, \omega_2, \cdots$ are contained in an open set over which $F$ is bounded below by a scalar $F^*$, i.e., $\forall t, F^* \le F(\omega_t)$.
\end{assumption}

\begin{assumption}
(\emph{bounded gradient}) The $L_2$ norm of stochastic gradient is bounded by a constant G, i.e., $\| g(\omega; \xi) \| \le G$.
\end{assumption}

\begin{assumption}
(\emph{unbiased gradient}) The stochastic gradient is unbiased for any parameter $\omega$, i.e.,
\begin{equation*}
\mathbb{E}_{\xi}[g(\omega; \xi)]=\nabla F(\omega)
\end{equation*}
\end{assumption}

\begin{assumption}
(\emph{bounded variance}) The variance of stochastic gradient is bound by a constant $\sigma^{2}$, i.e.,
\begin{equation*}
\mathbb{E}_{\xi}\left[\|g(\omega; \xi)-\nabla F(\omega)\|^{2}\right] \leq \sigma^{2}
\end{equation*}
\end{assumption}

\section{Method and Analysis}\label{Section:MethodologyAndAnalysis}

The workflow of PRLC is shown in Algorithm \ref{alg:algorithm1}. Line 5 presents the reduction of the number of communication rounds for each worker. Specifically, each worker $i$ pulls the global model with a probability $r$ and the workers that not pull the global model update their local model with the local gradient. The intuition lies in using the local gradient to approximate the average of all gradients to compensate the gap between the local model and the global model. We in Section~\ref{section:PR} theoretically show that the method with local compensation achieves a better tolerance of the low pulling ratio than the method without local compensation. 

\begin{algorithm}[htbp]
    \caption{Distributed SGD with PRLC}
    \label{alg:algorithm1}
    \begin{algorithmic}[1]
    \STATE {\bfseries Input:} Initialize $\omega_1^i=\omega_1$, learning rate $\eta_0$, 
    pulling ratio $r$, and iterations $T$
    \FOR{$t=1$ {\bfseries to} $T$}
        \STATE Each worker $i$ computes $g(\omega_t^i; \xi_t^i)$ in parallel;
        \STATE $\omega_{t+1} = \omega_t - \frac{\eta_t}{P}\sum\limits_{i=1}^{P}g(\omega_t^i; \xi_t^i)$;
        \STATE Each worker $i$ updates its local model with the pulled global model 
                or its local gradient:
               \begin{equation*}
                        \omega_{t+1}^i =\left\{
                        \begin{array}{rcl}
                        \omega_{t+1},                                       && {with\ probability\ r,} \\
                        \omega_{t}^i - \eta_t g(\omega_t^i; \xi_t^i),       && {otherwise.}     \\
                        \end{array} \right.
                    \end{equation*}
    \ENDFOR
    \end{algorithmic}
\end{algorithm}

\subsection{Analysis of PRLC}
In this section, we analyze the convergence rate of RPLC. The theoretical results show that the PRLC essentially admits the same convergence rate as classical synchronous SGD. Due to the page limitation, we give the sketch of the proof for the main theorem and omit other proofs. Details are presented in full paper.

Different from synchronized SGD mechanisms, in PRLC the global parameter and local parameter are not exactly the same for each worker, since each worker randomly pulls the global parameter and updates its own model. For any worker $i$ and iteration $t$, if worker $i$ last updated the global parameter in iteration $t-k$, where $k = 0,1,\ldots, t-1$, i.e., $\omega_{t-k}^i = \omega_{t-k}$ then we have:
\begin{equation}
\label{equ:localP}
\omega_{t}^i = \omega_{t-k} - \sum_{j=1}^k \eta g(\omega_{t-j}^i; \xi_{t-j}^i)
\end{equation}
and
\begin{equation}
\label{equ:globalP}
\omega_t = \omega_{t-k} - \frac{\eta}{p}\sum_{j=1}^k \sum_{i=1}^P \eta g(\omega_{t-j}^i; \xi_{t-j}^i)
\end{equation}
Therefore, we derive the bound of the differential between global parameter and local parameter and have that:
\begin{myfont}
\begin{align}
\label{equ:parameterGap}
 \| \omega_t - \omega_t^i \|^2 
 & =  \| \sum_{j=1}^k \eta g(\omega_{t-j}^i; \xi_{t-j}^i) - \frac{\eta}{P}\sum_{j=1}^k \sum_{i=1}^P \eta g(\omega_{t-j}^i; \xi_{t-j}^i) \|^2 \nonumber\\
 &\le  2 \eta^2 \|\sum_{j=1}^k g(\omega_{t-j}^i; \xi_{t-j}^i)\|^2 + \frac{2\eta^2}{P^2} \|\sum_{j=1}^k \sum_{i = 1}^P g(\omega_{t-j}^i; \xi_{t-j}^i)\|^2
\end{align}
\end{myfont}

Now, we are ready to derive the convergence property of our proposed algorithm.
\begin{theorem}\label{Theorem:T1}
(\textbf{Convergence property, non-convex objective})When PRLC is running with a fixed learning rate $\eta$ for all iterations and the learning rate for every iteration satisfies:
\begin{equation}
\label{equ:eta}
0 < \eta \le \frac{-2 L r^2 + \sqrt{4L^2r^4 + 32L^2r^2(1-r)(2-r)}}{16L^2(1-r)(2-r)},
\end{equation}
then the expected average squared gradient norms of $F$ are bounded for all $T\in \mathbb{N}$:
\begin{myfont}
\begin{equation}
    \label{theorem1:bound1}
\frac{1}{T}\sum_{t=1}^T \mathbb{E} \|\nabla F(\omega_t)\|^2 \le \frac{2|F(\omega_{1}) - F(\omega^*)|}{\eta T} + 2A_{\eta,L,r,G,\sigma}
\end{equation}
\end{myfont}
where $A_{\eta,L,r,G,\sigma,P} = \frac{2\eta^2 L^2 (1-r)(2-r)(PG^2 + 2\sigma^2) + L \eta \sigma^2 r^2}{Pr^2} $.
\end{theorem}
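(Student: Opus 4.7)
The plan is to start from the descent lemma induced by $L$-smoothness applied to consecutive global iterates $\omega_t$ and $\omega_{t+1}$, substitute the recursion $\omega_{t+1}-\omega_t = -\tfrac{\eta}{P}\sum_{i=1}^P g(\omega_t^i;\xi_t^i)$ into both the inner-product and squared-norm terms, take conditional expectation (using Assumption 4 to replace each $g(\omega_t^i;\xi_t^i)$ by $\nabla F(\omega_t^i)$), and carefully track the gap $\omega_t-\omega_t^i$ induced by the intermittent pulling. The core technical issue is that the gradient is evaluated at the local copy $\omega_t^i$, not the server copy $\omega_t$, so standard SGD analysis does not close; we need the bound on $\|\omega_t-\omega_t^i\|^2$ derived just above the statement.

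Next, I would decompose the linear term by the polarization identity
\begin{equation*}
\nabla F(\omega_t)^{\!\top}\!\nabla F(\omega_t^i) = \tfrac{1}{2}\|\nabla F(\omega_t)\|^2+\tfrac{1}{2}\|\nabla F(\omega_t^i)\|^2-\tfrac{1}{2}\|\nabla F(\omega_t)-\nabla F(\omega_t^i)\|^2,
\end{equation*}
and then use $L$-smoothness to bound $\|\nabla F(\omega_t)-\nabla F(\omega_t^i)\|^2\le L^2\|\omega_t-\omega_t^i\|^2$. For the quadratic term $\|\sum_i g(\omega_t^i;\xi_t^i)\|^2$ I would split each gradient into its mean plus a zero-mean noise; by independence of the stochastic noises across workers, the cross-terms vanish in expectation, leaving a variance piece of size $\sigma^2/P$ plus a mean piece that, in turn, is split via $\|a+b\|^2\le 2\|a\|^2+2\|b\|^2$ into $\|\nabla F(\omega_t)\|^2$ and another $L^2\|\omega_t-\omega_t^i\|^2$ term.

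The crucial probabilistic step is evaluating $\mathbb{E}\|\omega_t-\omega_t^i\|^2$. Because each worker pulls independently with probability $r$ per round, the staleness $k$ (rounds since the last pull) is geometrically distributed with $\mathbb{P}(k=j)=r(1-r)^j$, so $\mathbb{E}[k]=(1-r)/r$ and $\mathbb{E}[k^2]=(1-r)(2-r)/r^2$ — exactly the combinatorial factor that appears in the constant $A_{\eta,L,r,G,\sigma,P}$. Applied to bound (5) with Assumption 3 (bounded gradient) on the first sum and Assumption 5 plus independence on the second, this gives $\mathbb{E}\|\omega_t-\omega_t^i\|^2\lesssim \eta^2\bigl(G^2+\sigma^2/P\bigr)(1-r)(2-r)/r^2$, which is the single estimate that feeds both the polarization remainder and the quadratic remainder from the previous paragraph.

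Finally, I would sum the resulting per-step inequality from $t=1$ to $T$, telescope the $F(\omega_t)-F(\omega_{t+1})$ differences against $F^*$ via Assumption 2, divide by $T$, and collect coefficients. The hypothesis (6) on $\eta$ is exactly what makes the net coefficient of $\|\nabla F(\omega_t)\|^2$ on the left side at least $\eta/2$ after absorbing the two $L^2\|\omega_t-\omega_t^i\|^2$ remainders; solving the corresponding quadratic inequality in $\eta$ produces the closed-form step-size bound in the theorem, and the residual terms combine into $A_{\eta,L,r,G,\sigma,P}$. The main obstacle I anticipate is bookkeeping the two independent randomness sources cleanly — the pulling Bernoullis (which make $k$ random and couple history across iterations) and the per-step stochastic gradient noise — so that the variance and bounded-gradient contributions do not double-count, and so that a single quadratic in $\eta$ emerges whose positivity gives precisely the stated step-size constraint.
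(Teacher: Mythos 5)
Your outline follows the paper's proof almost step for step: the same $L$-smooth descent lemma, the same polarization decomposition of the inner product, the same reduction of $\|\nabla F(\omega_t)-\nabla F(\omega_t^i)\|^2$ to $L^2\|\omega_t-\omega_t^i\|^2$, the same geometric staleness law $\mathbb{P}(k=\ell)=r(1-r)^\ell$ with $\sum_\ell (1-r)^\ell\ell^2 = (1-r)(2-r)/r^3$ supplying the $(1-r)(2-r)/r^2$ factor, the same mean--variance split of $\|\sum_i g(\omega_t^i;\xi_t^i)\|^2$ (the paper's Lemma~\ref{LM:BoundGradient}), and the same telescoping. So the skeleton is right and would deliver a bound of the stated order.

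The one place you diverge is exactly where the theorem's peculiar step-size condition comes from, and your shortcut does not reproduce it. You propose to bound the second piece of the staleness gap, $\frac{1}{P^2}\|\sum_{j=1}^k\sum_i g(\omega_{t-j}^i;\xi_{t-j}^i)\|^2$, directly by constants of the form $\eta^2(G^2+\sigma^2/P)(1-r)(2-r)/r^2$. If you do that, the only coefficient you need to make nonpositive is $(2L\eta^2-\eta)/(2P^2)$ on $\|\sum_i\nabla F(\omega_t^i)\|^2$, and the ``quadratic in $\eta$'' you solve is just $2L\eta^2-\eta\le 0$, i.e.\ $\eta\le 1/(2L)$ --- not the condition in~(\ref{equ:eta}), whose $(1-r)(2-r)$ dependence cannot appear this way. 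The paper instead keeps the accumulated term as $\mathbb{E}\|\sum_i g(\omega_\ell^i;\xi_\ell^i)\|^2$ over past iterations, sums over $t$ (their Lemma~3 bounds the resulting double sum by $\frac{(1-r)(2-r)}{r^3}\sum_{t}\mathbb{E}\|\sum_i g\|^2$), and only then applies the mean--variance split; the surviving $\|\sum_i\nabla F(\omega_t^i)\|^2$ contributions carry an $\eta^3$ coefficient that is merged with the $(2L\eta^2-\eta)$ term into $B_{\eta,L,r,P}=\frac{8\eta^3L^2(1-r)(2-r)+(2L\eta^2-\eta)r^2}{2P^2r^2}$, and requiring $B_{\eta,L,r,P}\le 0$ is precisely~(\ref{equ:eta}). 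Your route also yields a residual constant of the form $\eta^2L^2\frac{(1-r)(2-r)}{r^2}(4G^2+2\sigma^2/P)$ rather than the stated $\frac{2\eta^2L^2(1-r)(2-r)(PG^2+2\sigma^2)}{Pr^2}$; these do not dominate one another in general, so you would be proving a same-order variant of the theorem rather than the statement with its exact constant $A_{\eta,L,r,G,\sigma,P}$ and step size. To recover the statement verbatim, defer the bounding of the accumulated average-gradient term until after the sum over $t$, as the paper does.
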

\begin{proof}
We give the sketch of the proof.

Let $\xi$ be the set of mini-batches. For any worker $i$ and iteration $t$, by taking the expectation with respect to $\xi$ and $k$, we can derive:
\begin{equation}
\label{equ:expectation1}
\mathbb{E}_{\{k,\xi\}} \|\sum_{j=1}^k g(\omega_{t-j}^i; \xi_{t-j}^i)\|^2 \le \frac{(1-r)(2-r)G^2}{r^2}
\end{equation}
\begin{align}
\label{equ:expectation2}
{} & \mathbb{E}_{\{k,\xi\}} \|\sum_{j=1}^k \sum_{i = 1}^P g(\omega_{t-j}^i; \xi_{t-j}^i)\|^2 \nonumber\\
= & r \sum_{\ell = 0}^{t-1} \sum_{m=1}^\ell [(1-r)^{t-m}(t-m) \mathbb{E}_{\xi} \| \sum_{i=1}^P g(\omega_\ell^i; \xi_\ell^i) \|^2 ]
\end{align} 
Combining with (\ref{equ:parameterGap}) and $L$-smooth assumption, we have:
\begin{myfont}
\begin{align}
\label{equ:gradientGap}
{} & \mathbb{E}_\xi \left[ \| \nabla F(\omega_t) - \nabla F(\omega_t^i) \|^2 \right] \le \frac{4\eta^2 L^2 G^2 (1-r)(2-r)}{r^2} \nonumber\\
+ & \frac{4\eta^2 L^2 r}{P^2} \sum_{\ell = 0}^{t-1} \sum_{m=1}^\ell [(1-r)^{t-m}(t-m) \mathbb{E}_\xi \| \sum_{i=1}^P g(\omega_\ell^i; \xi_\ell^i) \|^2 ] 
\end{align}
\end{myfont}
Note that the inequality $\mathbb{E}_{\xi} [\| \sum_{i = 1}^P g(\omega_t^i; \xi_t^i) \|^2] \le 2P\sigma^2 + 2\sum_{i=1}^P \| \nabla F(\omega_t^i)  \| ^2$ holds for any $\omega_t^i$ and $\xi_t^i$, we can derive that:
\begin{align}
{} & \mathbb{E}_{\xi} [ F(\omega_{t+1}) - F(\omega_t)] \nonumber\\
\le & \mathbb{E}_{\xi} [\nabla F(\omega_t)(\omega_{t+1}-\omega_t)^T] + \frac{L}{2}\mathbb{E}_{\xi}\|\omega_{t+1}-\omega_t\|^2 \nonumber\\
= & \mathbb{E}_{\xi} [\nabla F(\omega_t)(-\frac{\eta}{P}\sum_{i=1}^P g(\omega_t^i;\xi_t^i))^T ] + \frac{L}{2} \mathbb{E}_{\xi} \|\frac{\eta}{P}\sum_{i=1}^P g(\omega_t^i;\xi_t^i)\|^2 \nonumber\\
= & -\frac{\eta}{2}\mathbb{E}_{\xi}\|\nabla F(\omega_t)\|^2 + \frac{2 L\eta^2 - \eta}{2P^2}\mathbb{E}_{\xi}\|\sum_{i=1}^P \nabla F(\omega_t^i)\|^2  \nonumber\\
+ &  \frac{L\eta^2\sigma^2}{P} + \frac{\eta}{2P}\sum_{i=1}^P \mathbb{E}_{\xi}\|  \nabla F(\omega_t) - \nabla F(\omega_t^i)\|^2
\end{align}
Summing up the above equation for $t=1$ to $T$ for both sides and replacing $\mathbb{E}_\xi \left[ \| \nabla F(\omega_t) - \nabla F(\omega_t^i) \|^2 \right]$ according to (\ref{equ:gradientGap}) immediately yield that: 
\begin{align}
{} & \mathbb{E} F(\omega_{t+1}) - F(\omega_1) \nonumber\\
\le & -\frac{\eta}{2}\sum_{t=1}^T \mathbb{E} \|\nabla F(\omega_t)\|^2 + \frac{2 L\eta^2 - \eta}{2P^2} \mathbb{E} \sum_{t=1}^T \|\sum_{i=1}^P \nabla F(\omega_t^i)\|^2 \nonumber\\
+ & \frac{2\eta^3 L^2 G^2 T (1-r)(2-r)}{r^2} + \frac{L\eta^2\sigma^2T}{P} \nonumber\\
+ & \frac{2\eta^3 L^2 (1-r)(2-r)}{P^2 r^2} \sum_{t=1}^{T-1} \mathbb{E} \| \sum_{i=1}^P g(\omega_\ell^i; \xi_\ell^i) \|^2
\end{align}
By replacing $\mathbb{E} \| \sum_{i=1}^P g(\omega_\ell^i; \xi_\ell^i) \|^2$ according to the result of lemma 1, we get that
\begin{align}
\mathbb{E} F(\omega_{t+1}) - F(\omega_1) & \le  -\frac{\eta}{2}\sum_{t=1}^T \|\nabla F(\omega_t)\|^2 + A_{\eta,L,r,G,\sigma} T \eta  \nonumber\\
{} & +  B_{\eta,L,r,P} \mathbb{E} \sum_{t=1}^T \|\sum_{i=1}^P \nabla F(\omega_t^i)\|^2 
\end{align}
where $A_{\eta,L,r,G,\sigma,P} = \frac{2\eta^2 L^2 (1-r)(2-r)(PG^2 + 2\sigma^2) + L \eta \sigma^2 r^2}{r^2P} $ and \\
$B_{\eta,L,r,P} = \frac{8\eta^3 L^2 (1-r)(2-r) + (2L\eta^2 - \eta) r^2}{2P^2 r^2}$ are two constants.

By setting $B_{\eta,L,r,P} < 0$, we can derive the satisfied stepsize, i.e., $0 < \eta \le \frac{-2 L r^2 + \sqrt{4L^2r^4 + 32L^2r^2(1-r)(2-r)}}{16L^2(1-r)(2-r)}$. Then we have
\begin{equation*}
\mathbb{E} F(\omega_{t+1}) - F(\omega_1) \le  -\frac{\eta}{2}\sum_{t=1}^T \|\nabla F(\omega_t)\|^2 + A_{\eta,L,r,G,\sigma,P} T \eta
\end{equation*}

Since $F(\omega_{T+1})$ is bound by $F(\omega^*)$, we have
\begin{equation}
\label{equ:convergence}
\frac{1}{T}\sum_{t=1}^T \mathbb{E} \|\nabla F(\omega_t)\|^2 \le \frac{2|F(\omega_{1}) - F(\omega^*)|}{\eta T} + 2A_{\eta,L,r,G,\sigma,P}
\end{equation}
\end{proof}

According to (\ref{equ:convergence}), the average norm of gradient converges to a non-zero constant $2A_{\eta,L,r,G,\sigma,P}$ under a fixed learning rate with $T\rightarrow \infty$. 
\begin{theorem}
Let $ \eta = \sqrt{\frac{[F\left(\omega_{1}\right)-F\left(\omega^*\right)]P }{L \sigma^{2} T}}$
then for any iteration times
\begin{myfont}
\begin{equation}
T \ge \frac{256[F\left(\omega_{1}\right)-F\left(\omega^*\right)]L^3(1-r)^2(2-r)^2P}{\sigma^2 [-2 L r^2 + \sqrt{4L^2r^4 + 32L^2r^2(1-r)(2-r)}]^2}
\end{equation}
\end{myfont}
Algorithm 1 satisfy the following ergodic convergence rate
\begin{equation}
\label{equ:iteration}
\frac{1}{T} \sum_{t=1}^{T} \mathbb{E}\left\|\nabla F\left(\omega_{t}\right)\right\|^{2} \preceq O( \frac{1}{\sqrt{PT}}) + O(\frac{P}{T}),
\end{equation}
where $\preceq$ denotes order inequality, which means less than or equal to up to a constant factor.
\end{theorem}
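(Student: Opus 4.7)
The plan is to specialize the non-convex bound from Theorem~\ref{Theorem:T1} to the stated square-root step size and then read off the dominant orders in $T$ and $P$. Recall that Theorem~\ref{Theorem:T1} gives
\[
\frac{1}{T}\sum_{t=1}^T \mathbb{E}\|\nabla F(\omega_t)\|^2 \le \frac{2|F(\omega_{1})-F(\omega^*)|}{\eta T} + 2A_{\eta,L,r,G,\sigma,P},
\]
with $A_{\eta,L,r,G,\sigma,P} = \frac{2\eta^2 L^2(1-r)(2-r)(PG^2 + 2\sigma^2) + L\eta\sigma^2 r^2}{Pr^2}$. The first thing to check is \emph{admissibility}: plugging $\eta = \sqrt{[F(\omega_1)-F(\omega^*)]P/(L\sigma^2 T)}$ into the step-size constraint (\ref{equ:eta}) and squaring yields a clean inequality in $T$ whose rearrangement is exactly the stated lower bound on $T$. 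Squaring is legitimate since both sides of (\ref{equ:eta}) are positive for $r\in(0,1]$.

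Next I would substitute the step size into each surviving term of the Theorem~\ref{Theorem:T1} bound and track the powers of $P$ and $T$. The optimization-gap contribution $2|F(\omega_1)-F(\omega^*)|/(\eta T)$ collapses to $2\sqrt{L\sigma^2[F(\omega_1)-F(\omega^*)]/(PT)}$, which is $O(1/\sqrt{PT})$. Inside $A$, the linear-in-$\eta$ piece simplifies as $L\eta\sigma^2 r^2/(Pr^2) = L\eta\sigma^2/P$ and evaluates to $\sqrt{L\sigma^2[F(\omega_1)-F(\omega^*)]/(PT)}$, again $O(1/\sqrt{PT})$. The $\eta^2$ contribution splits in two: the $G^2$ piece $2\eta^2 L^2(1-r)(2-r)PG^2/(Pr^2)$ evaluates to a constant times $[F(\omega_1)-F(\omega^*)]G^2 P/(\sigma^2 T)$, giving $O(P/T)$, while the residual $4\eta^2 L^2(1-r)(2-r)\sigma^2/(Pr^2)$ is of order $1/T$ and is dominated by $O(P/T)$ whenever $P \ge 1$.

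Summing these orders produces exactly
\[
\frac{1}{T}\sum_{t=1}^{T}\mathbb{E}\|\nabla F(\omega_t)\|^2 \preceq O\!\left(\frac{1}{\sqrt{PT}}\right) + O\!\left(\frac{P}{T}\right),
\]
as claimed. The argument is almost pure bookkeeping, so the main obstacle is not conceptual but rather keeping the constants in (\ref{equ:eta}) straight when inverting it for the $T$ threshold: the factor $256L^3(1-r)^2(2-r)^2$ in the numerator comes from $(16L^2(1-r)(2-r))^2 = 256L^4(1-r)^2(2-r)^2$ cancelling against a single $L$ released by $\eta^2 = [F(\omega_1)-F(\omega^*)]P/(L\sigma^2 T)$. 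Any slip in that cancellation would misalign the stated constant. Once admissibility is verified, none of the remaining factors $r,L,G,\sigma,P$ scale with $T$, and the stated asymptotic rate follows by termwise addition.
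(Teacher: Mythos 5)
Your proposal is correct and follows essentially the same route as the paper: substitute the stated $\eta$ into the bound of Theorem~\ref{Theorem:T1}, read off the $O(1/\sqrt{PT})$ and $O(P/T)$ orders term by term, and obtain the threshold on $T$ by squaring the step-size constraint (\ref{equ:eta}), which is exactly where the factor $256L^3(1-r)^2(2-r)^2$ arises. Your bookkeeping is in fact slightly more explicit than the paper's sketch (which lumps the $\eta^2$ contribution into an $O(1/T)$ remainder before stating $O(P/T)$ in the theorem), but there is no substantive difference in approach.
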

\begin{proof}
    See Appendix of full paper.
\end{proof}

\textbf{Discussion.} Clearly, $O(\frac{1}{\sqrt{PT}})$ dominates the convergence as $T$ is large. By diminishing $\eta$ during the learning process, i.e., $\eta = O(\sqrt{P/T})$, it is easy to find that PRLC has a convergence rate of $O(1/\sqrt{PT})$, as shown in the following theorem. This result suggests that PRLC essentially admits the same convergence rate as non-compression distributed SGD since it has the asymptotical convergence rate $O(1/\sqrt{PT})$, which means it has the linear speedup property and high efficiency in large-scale distributed learning.

\subsection{PRLC Tolerates Lower Ratio than PR}\label{section:PR}
The algorithm of Pulling communication Reduction (PR) without local compensation is represented in Algorithm~\ref{alg:algorithm2}. The difference lies in Line~5, in which the workers of PR 
not pulling the global model do not compensate the gap with the local update. Its convergence is shown in the following Theorem~\ref{Theorem:PR}. 

\begin{theorem}\label{Theorem:PR}
    (Convergence property, non-convex objective)When algorithm is running with fixed learning rate $\eta$ for all iterations, and the learning rate for every iteration satisfies:
    \begin{equation}
    \label{equ:eta}
    0 < \eta \le \frac{-4 L + \sqrt{16L^2 + 32L(1-r)}}{16L(1-r)},
    \end{equation}
    and large ratio $0.5 \le r$, then the expected average squared gradient norms of $F$ are bounded for all $T\in \mathbb{N}$:
    \begin{align}
        \label{theoremPR:bound1}
        &\frac{1}{T}\sum_{t=1}^T \mathbb{E} \|\nabla F(\omega_t)\|^2 \le \frac{2(F(\omega_{1}) - F(\omega^*)}{\eta T} +  \frac{2L\eta\sigma^2}{P}  \nonumber \\
        &+ \frac{4\eta^2 L \sigma^2}{P}\left[\frac{1-r}{2r-1} - \frac{(1-r)(1-2^T(1-r)^T)}{(2r-1)^2T} \right]
    \end{align}
    \end{theorem}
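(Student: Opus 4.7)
The plan is to mirror the proof of Theorem~\ref{Theorem:T1} but to adapt every step to PR, where a worker that does not pull simply freezes its local parameter (no local gradient step), i.e.\ $\omega_{t+1}^i = \omega_t^i$ in the ``otherwise'' branch. First I would characterise the local--global gap: if worker $i$ last pulled at iteration $t-k$, then $\omega_t^i = \omega_{t-k}$ while $\omega_t = \omega_{t-k} - \frac{\eta}{P}\sum_{j=1}^k \sum_{i=1}^P g(\omega_{t-j}^i; \xi_{t-j}^i)$. Hence
\begin{equation*}
\|\omega_t - \omega_t^i\|^2 \;=\; \frac{\eta^2}{P^2}\Bigl\|\sum_{j=1}^k \sum_{i=1}^P g(\omega_{t-j}^i; \xi_{t-j}^i)\Bigr\|^2,
\end{equation*}
which, unlike (\ref{equ:parameterGap}), contains only a single term because the PRLC local-gradient contribution is absent.

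Next I would take expectation over the geometric staleness $k$, for which $\Pr[k=j] = r(1-r)^j$ (so $\mathbb{E}\,k = (1-r)/r$ and $\mathbb{E}\,k^2 = (1-r)(2-r)/r^2$), together with expectation over the mini-batches $\xi$. Cauchy--Schwarz pulls $k$ out of the squared norm, and the inequality $\mathbb{E}_\xi \|\sum_{i=1}^P g(\omega_\ell^i;\xi_\ell^i)\|^2 \le 2P\sigma^2 + 2\|\sum_{i=1}^P \nabla F(\omega_\ell^i)\|^2$ from the proof of Theorem~\ref{Theorem:T1} yields a bound of the form $\mathbb{E}\|\omega_t-\omega_t^i\|^2 \le \frac{\eta^2}{P^2}\sum_{\ell<t} w_{t-\ell}\,\mathbb{E}\|\sum_i g(\omega_\ell^i;\xi_\ell^i)\|^2$, where the weights $w_{t-\ell}$ decay geometrically at rate $1-r$. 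Applying $L$-smoothness of $\nabla F$ converts this into the gradient-gap bound $\mathbb{E}\|\nabla F(\omega_t) - \nabla F(\omega_t^i)\|^2$ that is needed in the descent inequality.

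I would then apply $L$-smoothness of $F$ between $\omega_{t+1}$ and $\omega_t$, as in the proof of Theorem~\ref{Theorem:T1}, and sum over $t=1,\dots,T$. The result is an inequality containing a double sum $\sum_{t=1}^T \sum_{\ell<t} w_{t-\ell}\,E_\ell$, with $E_\ell := \mathbb{E}\|\sum_{i=1}^P g(\omega_\ell^i;\xi_\ell^i)\|^2$. Swapping the order of summation concentrates all weights on $E_\ell$ into a finite geometric tail $\sum_{s=1}^{T-\ell} w_s$. Splitting $E_\ell$ via the same $2P\sigma^2 + 2\|\sum_i \nabla F(\omega_\ell^i)\|^2$ bound then separates a noise piece, which produces the $\sigma^2$-bracket in (\ref{theoremPR:bound1}), from a deterministic piece, which is absorbed into the telescoping descent by choosing $\eta$ so that the coefficient of $\|\sum_i \nabla F(\omega_t^i)\|^2$ is non-positive. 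Solving this quadratic in $\eta$ gives the stated learning-rate ceiling.

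The hard part is the recursion unrolling. The contraction factor that governs the unrolled sum is $2(1-r)$, so the condition $r \ge 0.5$ emerges naturally as the threshold $2(1-r) \le 1$ needed to keep $2^T(1-r)^T$ bounded by $1$ and the bracket in (\ref{theoremPR:bound1}) non-negative; without local compensation there is no frozen-gradient $G^2$ term to ``absorb'' the staleness, and the recursion is the only way to bound $\|\omega_t-\omega_t^i\|^2$. Evaluating the partial geometric sum $\sum_{s=1}^{T-\ell}(2(1-r))^s$ exactly, rather than passing to the asymptotic $1/(2r-1)$, is precisely what produces the correction $\frac{1-r}{2r-1} - \frac{(1-r)(1-2^T(1-r)^T)}{(2r-1)^2 T}$. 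After this bookkeeping, dividing by $\eta T/2$ and using $F(\omega_{T+1}) \ge F(\omega^*)$ yields (\ref{theoremPR:bound1}).
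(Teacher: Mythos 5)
Your plan coincides with the paper's proof: the decisive step is the one-step recursion $\mathbb{E}\|\omega_t-\omega_t^i\|^2 \le 2(1-r)\,\mathbb{E}\|\omega_{t-1}-\omega_{t-1}^i\|^2 + 2(1-r)\,\eta^2\,\mathbb{E}\|\frac{1}{P}\sum_{i}g(\omega_{t-1}^i;\xi_{t-1}^i)\|^2$, unrolled with ratio $2(1-r)$ (whence the $r\ge 0.5$ threshold and, after swapping the double sum, the exact partial-geometric-sum bracket), followed by the $2P\sigma^2+2\|\sum_i\nabla F(\omega_t^i)\|^2$ split and the choice of $\eta$ that makes the $\|\sum_i\nabla F(\omega_t^i)\|^2$ coefficient non-positive — exactly the route of the paper's Lemma~\ref{LM:PRGradientDifference} and the subsequent telescoping. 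One caveat: your middle paragraph's route of averaging directly over the geometric staleness $k$ (weights decaying at rate $1-r$, as in the PRLC analysis) is a genuinely different computation that would \emph{not} produce the $2^{T}(1-r)^{T}$ factors or require $r\ge 0.5$, so the written proof must commit to the recursive unrolling with factor $2(1-r)$ that you describe in your final paragraph, since only that version yields the constants in the stated bound.
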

\begin{proof}
    See Appendix of full paper.
\end{proof}

Compared to PR, PRLC converges with more relaxed constraints for the value of pulling ratio. 
Consequently, PRLC tolerates lower ratio than PR theoratically.

\begin{algorithm}[htbp]
    \caption{Distributed SGD with PR}
    \label{alg:algorithm2}
    \begin{algorithmic}[1]
    \STATE {\bfseries Input:} Initialize $\omega_1^i=\omega_1$, learning rate $\eta_0$, 
    pulling ratio $r$, and iterations $T$
    \FOR{$t=1$ {\bfseries to} $T$}
        \STATE Each worker $i$ computes $g(\omega_t^i; \xi_t^i)$ in parallel;
        \STATE $\omega_{t+1} = \omega_t - \frac{\eta_t}{P}\sum\limits_{i=1}^{P}g(\omega_t^i; \xi_t^i)$;
        \STATE Each worker $i$ updates its local model with global model or 
               local compensation:
               \begin{equation*}
                        \boldsymbol{\omega}_{t+1}^i =\left\{
                        \begin{array}{rcl}
                        \omega_{t+1},                                       && {with\ probability\ r,} \\
                        \omega_{t}^i,       && {otherwise.}     \\
                        \end{array} \right.
                    \end{equation*}
    \ENDFOR
    \end{algorithmic}
\end{algorithm}

\subsection{PRLC Scales Better than ASGD}
\begin{theorem}
    PRLC has better scalability than ASGD.
\end{theorem}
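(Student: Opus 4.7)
The plan is to compare the ergodic convergence rate of PRLC in the non-convex setting (established in Theorem 2) with the standard convergence rate of ASGD, and to use the iteration complexity required for asymptotic linear speedup in $P$ as a proxy for scalability. First, I would recall from Theorem 2 that PRLC attains
\begin{equation*}
\frac{1}{T}\sum_{t=1}^{T}\mathbb{E}\|\nabla F(\omega_t)\|^2 \preceq O\left(\frac{1}{\sqrt{PT}}\right) + O\left(\frac{P}{T}\right),
\end{equation*}
so the linear-speedup term $O(1/\sqrt{PT})$ begins to dominate once $T$ is of order $P$, giving iteration complexity $\Omega(P)$ for entering the speedup regime.

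Second, I would invoke the standard non-convex ASGD convergence result (e.g., Lian et al., 2015) under the same $L$-smoothness, bounded-variance, and bounded-delay assumptions used throughout this paper. That result gives a bound of the form
\begin{equation*}
\frac{1}{T}\sum_{t=1}^{T}\mathbb{E}\|\nabla F(\omega_t)\|^2 \preceq O\left(\frac{1}{\sqrt{PT}}\right) + O\left(\frac{\tau^{2}}{T}\right),
\end{equation*}
where $\tau$ is the worst-case gradient staleness. Because in a $P$-worker asynchronous system the staleness naturally scales as $\tau = \Theta(P)$, this specializes to a second-order term $O(P^{2}/T)$, so ASGD does not enter the linear-speedup regime until $T = \Omega(P^{3})$.

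Third, I would conclude by comparing the two thresholds directly: PRLC requires $T = \Omega(P)$ iterations for linear speedup, while ASGD requires $T = \Omega(P^{3})$. Since the former is strictly smaller for every $P>1$, adding workers begins to pay off sooner under PRLC, which formalizes the claim that PRLC scales better than ASGD. The same reasoning can be repackaged as: to reach a target gradient norm $\epsilon$ under the linear-speedup regime, PRLC needs about $P$ times fewer samples than ASGD when the systems are stressed with many workers.

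The main obstacle will be selecting and faithfully quoting an ASGD convergence bound whose assumptions match ours, as several variants exist in the literature with different dependences on $\tau$ and $P$, and the identification $\tau = \Theta(P)$ must be defended under the bounded-delay model. A secondary subtlety is making the informal notion of \emph{scalability} precise; I would do so by fixing the iteration-complexity comparison above, so that the final claim reduces to the elementary inequality $P \ll P^{3}$ and can be stated as a clean theorem with an explicit regime of validity.
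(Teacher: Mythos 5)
Your overall strategy is the same as the paper's: invoke the non-convex ASGD bound of Lian et al., identify the worst-case staleness with the number of workers \(P\), and argue that the term which does not vanish with more workers behaves better for PRLC than for ASGD. The packaging differs, though. The paper works with the \emph{fixed-learning-rate} bounds and compares the residual error floors directly: for ASGD the floor is \(\eta(C_0 + C_1 P\eta)\), which grows with \(P\), whereas for PRLC the floor is \(\eta\big[\tfrac{C_0'}{P} + \eta(\tfrac{C_1'}{P} + C_2')\big]\), which decays with \(P\); scalability is then read off from the sign of the \(P\)-dependence. You instead compare iteration-complexity thresholds for entering the linear-speedup regime under the optimized stepsize, which is arguably a cleaner and more standard way to make ``scalability'' precise, and you are right to flag that the notion needs such a definition and that the quoted ASGD bound must be matched to the assumptions here.

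However, your threshold arithmetic is wrong. From \(\frac{1}{\sqrt{PT}} \gtrsim \frac{P}{T}\) one gets \(\sqrt{T/P} \gtrsim P\), i.e.\ \(T = \Omega(P^3)\) for PRLC, not \(\Omega(P)\); and from \(\frac{1}{\sqrt{PT}} \gtrsim \frac{P^2}{T}\) one gets \(T = \Omega(P^5)\) for ASGD, not \(\Omega(P^3)\). The qualitative conclusion \(P^3 \ll P^5\) survives, so the argument can be repaired by redoing this computation, but as written the stated thresholds do not follow from the bounds you quote. A second caveat: the ASGD rate you write already presupposes the linear-speedup leading term \(O(1/\sqrt{PT})\), which in Lian et al.\ only holds under a condition coupling \(T\), \(P\), and \(\tau\); the paper sidesteps this by staying with the fixed-\(\eta\) form of their Theorem~1, which is the safer route if you do not want to defend that condition.
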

\begin{proof}
The convergence rate of ASGD is significantly limited by the staleness 
and the limitation becomes even serious as the system scales. 
We here consider the best case of ASGD where the maximum staleness is minimized, 
i.e., being equivalent to the number of workers $P$. Derived from Theorem~1 in 
Lian~et.al.\cite{lian2015asynchronous}, the convergence rate of ASGD with 
a fixed learning rate is 
\begin{equation}
    \label{eq:ASGD}
    \frac{1}{T} \sum\limits_{t=1}^{T} \mathbb{E} \|\nabla F(\omega_t)\|_2^2 
    \leq \frac{2(F(\omega_1) - F(\omega^*))}{T\eta} + \eta(C_0 +  C_1 P\eta),
\end{equation}
where $C_0$ and $C_1$ are constants independent of $P$.
As a comparison, we rewrite the bound of PRLC in (\ref{theorem1:bound1}) as
\begin{equation}
    \label{eq:SimpleBoundPRLC}
    \frac{1}{T} \sum\limits_{t=1}^{T} \mathbb{E} \|\nabla F(\omega_t)\|_2^2 
    \leq \frac{2(F(\omega_1) - F(\omega^*))}{T\eta} + \eta\Big[\frac{C_0'}{P} + \eta(\frac{C_1'}{P} + C_2')\Big],
\end{equation}
where $C_0', C_1'$, and $C_2'$ are constants independent of $P$. In bound 
(\ref{eq:ASGD}) of PR, the non-zero term grows with the scalability $P$, 
while the non-zero term (\ref{eq:SimpleBoundPRLC}) in bound of PRLC decays 
with scalability $P$. Hence, PRLC scales better than ASGD.
\end{proof}

\subsection{Extensions: Convergence in Convex Setting}
\begin{theorem}\label{Theorem:T8}
    (Convergence property, $c$-Strongly convex objective) When algorithm is 
    running with fixed learning rate $\eta$ for all iterations, and the learning rate for every iteration satisfies:
    \begin{equation}
    \label{equ:etaStrong}
    0 < \eta \le \frac{-r^2 + 2r\sqrt{r^2 + 16P(1-r)(2-r)}}{8PL(1-r)(2-r)},
    \end{equation}
    then the expected optimality gap satisfies the following inequality for all $t\in \mathbb{N}$:
    \begin{align}
        \label{theorem8:bound1}
        &\mathbb{E} \left[F(\omega_{t+1} - F(\omega^*)\right] \le \frac{D_{\eta,L,r,G,\sigma,P}}{\eta c} \nonumber \\
        &+ (1-\eta c)^t\left[F(\omega_1) - F(\omega^*) - \frac{D_{\eta,L,r,G,\sigma,P}}{\eta c}\right],
    \end{align}
    where $D_{\eta,L,r,G,\sigma,P} = \frac{2\eta^2 L^2 (1-r)(2-r)(\eta P^2 G^2 + 2\sigma^2) + 2 \eta^2 \sigma^2 r^2}{Pr^2} $.
\end{theorem}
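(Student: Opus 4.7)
The plan is to reuse the per-step descent machinery built up in the proof of Theorem~\ref{Theorem:T1}, specialize it into a per-iteration contraction that exploits the Polyak-Lojasiewicz inequality implied by $c$-strong convexity, and then unroll the resulting affine recursion into the closed-form bound~(\ref{theorem8:bound1}).

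Concretely, I would first apply the $L$-smoothness descent lemma to the synchronous step $\omega_{t+1} = \omega_t - \tfrac{\eta}{P}\sum_{i=1}^P g(\omega_t^i;\xi_t^i)$, take conditional expectation over $\xi_t$, and decompose the resulting inner product via the polarization identity $-a^{\top}b = \tfrac{1}{2}(\|a-b\|^2 - \|a\|^2 - \|b\|^2)$ exactly as in the non-convex proof. This yields
\[
\mathbb{E}[F(\omega_{t+1}) - F(\omega_t)] \le -\tfrac{\eta}{2}\|\nabla F(\omega_t)\|^2 + \tfrac{2L\eta^2-\eta}{2P^2}\|\textstyle\sum_{i=1}^P \nabla F(\omega_t^i)\|^2 + \tfrac{L\eta^2\sigma^2}{P} + \tfrac{\eta}{2P}\sum_{i=1}^P \mathbb{E}\|\nabla F(\omega_t)-\nabla F(\omega_t^i)\|^2.
\]
Next I would substitute the drift bound from~(\ref{equ:gradientGap}), replacing $\mathbb{E}\|\sum_i g(\omega_\ell^i;\xi_\ell^i)\|^2$ by the uniform upper bound $P^2G^2$ from the bounded-gradient assumption so that the inequality holds for every $t$ individually rather than only on average. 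Summing the resulting geometric-style series $\sum_{s\ge 1}s^2(1-r)^s = (1-r)(2-r)/r^3$ reproduces the $(1-r)(2-r)/r^2$ factor that already controls the drift in the non-convex proof. Choosing $\eta$ to satisfy~(\ref{equ:etaStrong})—which arises by solving the quadratic that sets the coefficient of $\|\sum_i \nabla F(\omega_t^i)\|^2$ to zero—makes that coefficient non-positive, and absorbing the remaining residue into $D_{\eta,L,r,G,\sigma,P}$ leaves $\mathbb{E}[F(\omega_{t+1})-F(\omega_t)] \le -\tfrac{\eta}{2}\|\nabla F(\omega_t)\|^2 + D_{\eta,L,r,G,\sigma,P}$.

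Finally, I would apply the Polyak-Lojasiewicz inequality $\|\nabla F(\omega_t)\|^2 \ge 2c\,(F(\omega_t)-F(\omega^*))$—a standard consequence of $c$-strong convexity—to convert the descent bound into the affine contraction $\mathbb{E}[F(\omega_{t+1})-F(\omega^*)] \le (1-\eta c)\mathbb{E}[F(\omega_t)-F(\omega^*)] + D_{\eta,L,r,G,\sigma,P}$. Setting $b := D_{\eta,L,r,G,\sigma,P}/(\eta c)$ makes the shifted sequence $\mathbb{E}[F(\omega_t)-F(\omega^*)] - b$ purely geometric with ratio $1-\eta c$, and iterating from $t=1$ recovers (\ref{theorem8:bound1}) verbatim. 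The main technical obstacle is the drift step: in the non-convex proof the quadratic-in-$\eta$ term involving $\|\sum_i \nabla F(\omega_t^i)\|^2$ could be absorbed into the constant $B_{\eta,L,r,P}$ only after summing over $t$ and commuting double sums, whereas here the bound must be closed iteration-by-iteration, which forces the cruder $P^2G^2$ bound on $\|\sum_i g(\omega_\ell^i;\xi_\ell^i)\|^2$. That substitution is precisely what introduces the extra factor $\eta P^2 G^2$ distinguishing $D_{\eta,L,r,G,\sigma,P}$ from the non-convex constant $A_{\eta,L,r,G,\sigma,P}$, and tracking it through the quadratic that defines~(\ref{equ:etaStrong}) is the most delicate piece of bookkeeping in the argument.
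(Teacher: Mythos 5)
Your overall skeleton --- descent lemma, substitute the drift bound, kill the $\|\sum_i\nabla F(\omega_t^i)\|^2$ coefficient via the step-size condition, apply $\|\nabla F(\omega_t)\|^2\ge 2c(F(\omega_t)-F(\omega^*))$, and unroll the affine recursion $\mathbb{E}[F(\omega_{t+1})-F(\omega^*)]\le(1-\eta c)\mathbb{E}[F(\omega_t)-F(\omega^*)]+D$ --- is exactly the paper's route. The divergence, and the gap, is in how you handle $\mathbb{E}\|\sum_i g(\omega_\ell^i;\xi_\ell^i)\|^2$ inside the drift term. You replace it by the crude uniform bound $P^2G^2$; the paper instead invokes Lemma~\ref{LM:BoundGradient}, $\mathbb{E}\|\sum_i g\|^2\le 2P\sigma^2+2\|\sum_i\nabla F(\omega_t^i)\|^2$, so that a $\|\sum_i\nabla F(\omega_t^i)\|^2$ contribution of size $\tfrac{4\eta^3L^2(1-r)(2-r)}{Pr^2}$ survives and is added to the $\tfrac{2L\eta^2-\eta}{2P^2}$ coefficient to form the quantity $H$ whose non-positivity is the quadratic defining~(\ref{equ:etaStrong}).

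This matters for two reasons. First, your write-up is internally inconsistent: if you really bound $\mathbb{E}\|\sum_i g\|^2$ by $P^2G^2$, then the only coefficient left on $\|\sum_i\nabla F(\omega_t^i)\|^2$ is $\tfrac{2L\eta^2-\eta}{2P^2}$, whose non-positivity gives merely $\eta\le\tfrac{1}{2L}$; the factors $P(1-r)(2-r)$ under the square root in~(\ref{equ:etaStrong}) cannot arise from that, so your claim that the stated quadratic comes from ``zeroing the coefficient of $\|\sum_i\nabla F(\omega_t^i)\|^2$'' does not follow from your own substitution. Second, the constant you would obtain is not $D_{\eta,L,r,G,\sigma,P}$: the term $\tfrac{4\eta^2L^2\sigma^2(1-r)(2-r)}{Pr^2}$ in $D$ is precisely the $2P\sigma^2$ residue of Lemma~\ref{LM:BoundGradient}, which the $P^2G^2$ shortcut erases, so you would not ``recover~(\ref{theorem8:bound1}) verbatim.'' Your route does yield \emph{a} valid linear-rate bound of the same shape (and sidesteps the awkward fact, present in the paper too, that the drift at iteration $t$ involves gradient norms at earlier iterations $\ell<t$), but as written it proves a different theorem with a different $D$ and a different admissible step-size range. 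To match the statement you must carry the $\|\sum_i\nabla F\|^2$ piece of Lemma~\ref{LM:BoundGradient} into the coefficient $H$ before solving for $\eta$.
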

\begin{proof}
    See Appendix of full paper.
\end{proof}

The Theorem~\ref{Theorem:T8} clearly shows that PRLC achieves a linear convergence rate for the strongly-convex 
objective. The convergence result is the same as sequential SGD (See Theorem 4.6 in \cite{DBLP:journals/siamrev/BottouCN18}).

\section{Evaluations}\label{Section:Evaluations}

\subsection{Experiment Settings}\label{label:ExpSetting}

We first measure the convergence and the communication rounds reduction of our method compared to Naive SGD (NSGD) and LAG-PS \cite{chen2018lag}.
The communication rounds are compared by using the average of accumulated pulling rounds of each worker. We then investigate the impact of different pulling ratios. Next, we present the efficiency of local compensation by comparing PRLC to PR. Finally, the time performance of PRLC is tested in a simulated setting of the edge environment.

\begin{figure}[htbp]
    \subfigure[LR]{
    \label{fig1:CommReductionLR}
    \begin{minipage}[t]{\columnwidth}
        \includegraphics[width=0.49\columnwidth, height=0.356\columnwidth]{./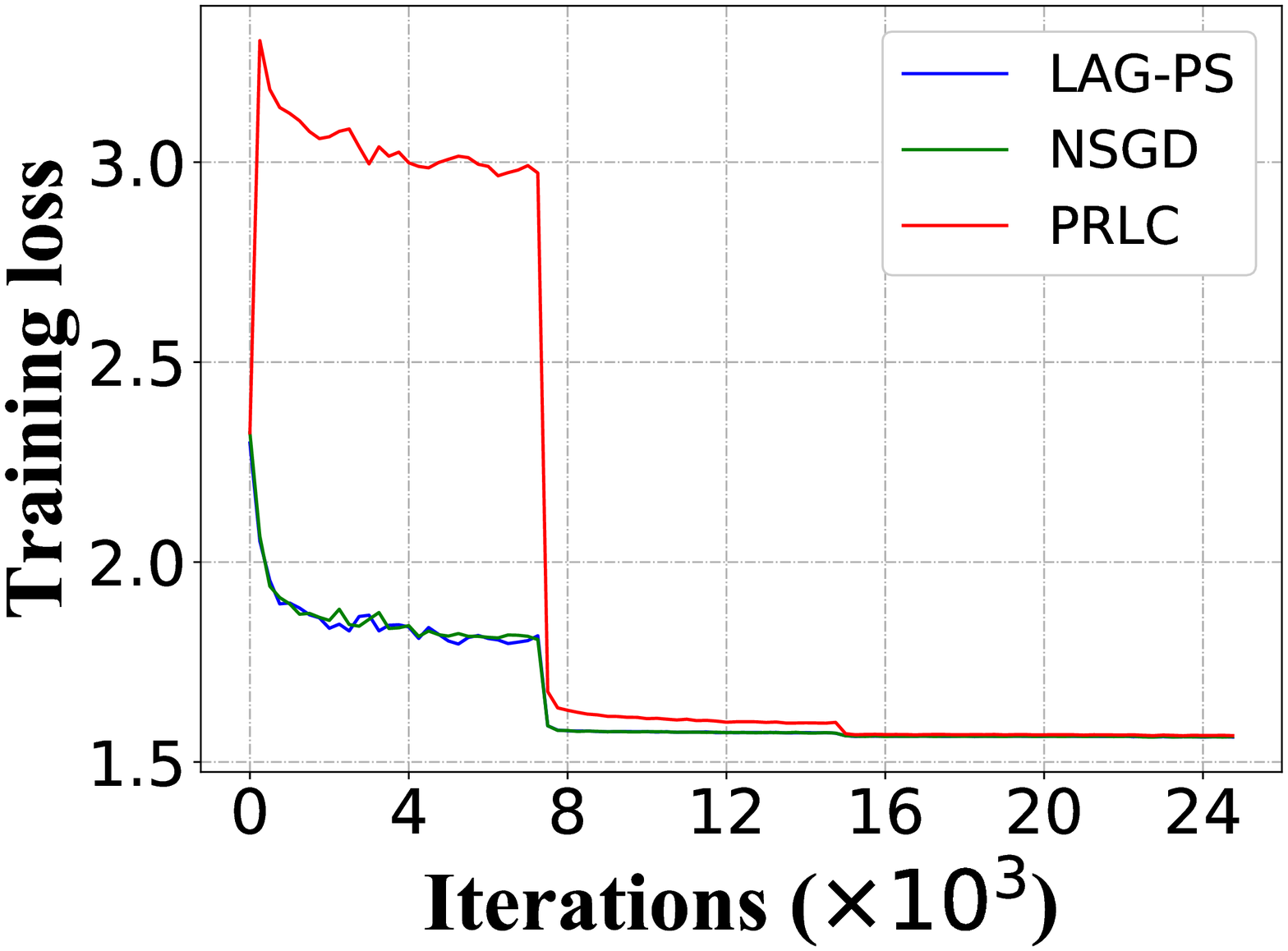} 
        \includegraphics[width=0.49\columnwidth, height=0.356\columnwidth]{./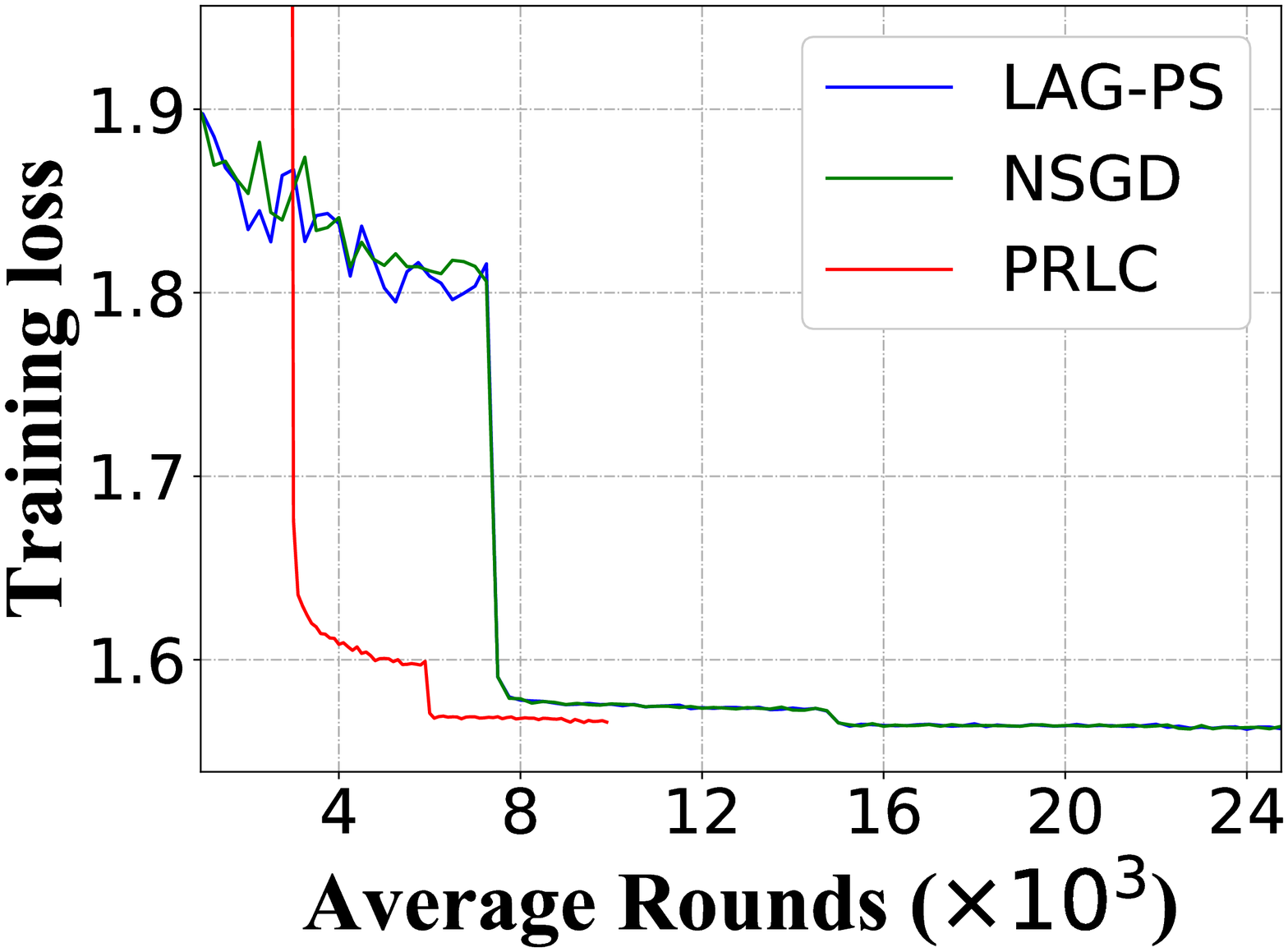}
    \end{minipage}
    }
    \subfigure[ResNet18]{
    \label{fig2:CommReductionResNet18}
    \begin{minipage}[t]{\columnwidth}
        \includegraphics[width=0.49\columnwidth, height=0.356\columnwidth]{./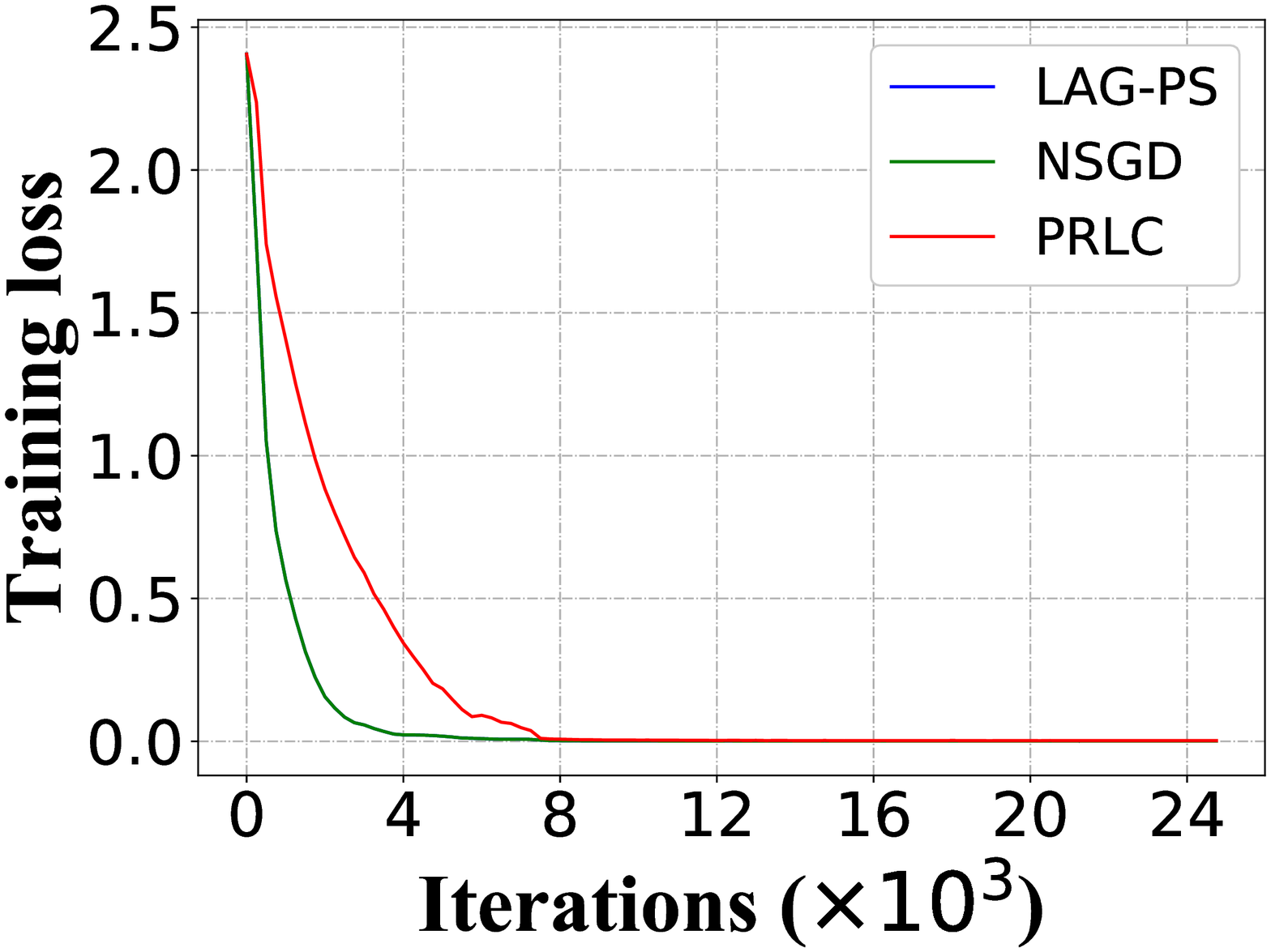}
        \includegraphics[width=0.49\columnwidth, height=0.356\columnwidth]{./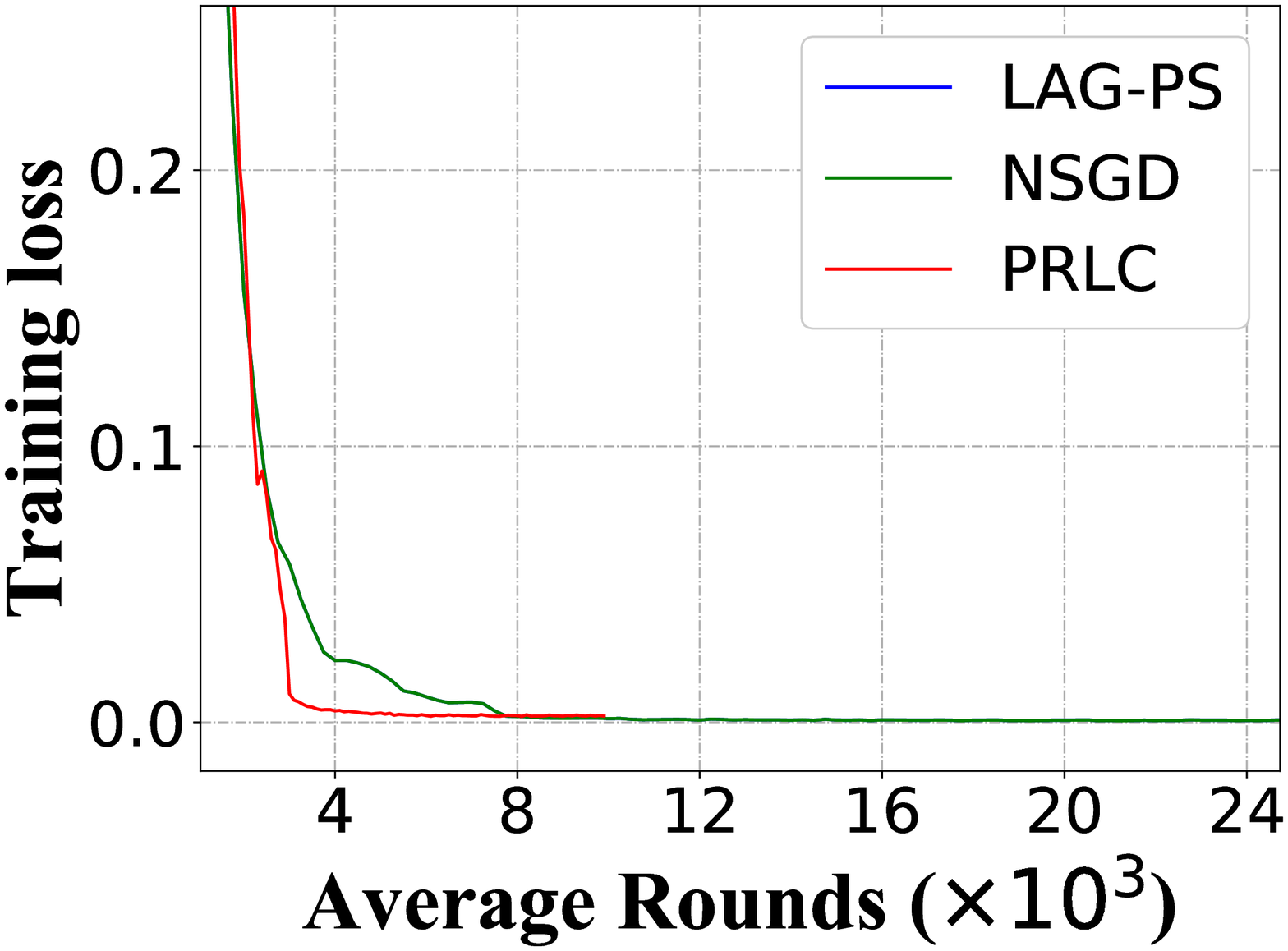}
    \end{minipage} 
    }
    \caption{Comparison with baseline. Experiments on 20-workers setting. 
    The hyper-parameters of LAG-PS are configured as the suggestion by the original paper. 
    The pulling ratio of PRLC is $r=0.4$.}
    \label{fig:CommReduction}
\end{figure}

We use GTX 1080TI GPUs to test our experiments with each GPU as a node. 
To simulate the real scenario of Federated Learning in an edge environment, 
we also conduct experiments on a cluster with low computing capacity and bandwidth. 
The cluster contains 20 virtual machines (VMs) as workers with each 
VM being configured $4$ CPU cores (2.6GHz) and 6GB RAM, and a VM as master 
with $16$ CPU cores (2.6GHz) and 16GB RAM.
We configure the bandwidth of the master VM to be 100MB/s via Linux Traffic 
Control tool \cite{LTC}. We implement all models and experiments 
on PyTorch 1.0 \cite{pytorch}. Our source code will be open after paper being accepted. 

We evaluate our method on CIFAR-10 \cite{krizhevsky2009learning} dataset 
using Logistic Regression (LR) and ResNet18 \cite{he2016deep} which cover 
both convex and non-convex models. For all experiments, the batch size is 
set to be $10$ in each worker, and the initial learning rate is set to 
be $0.1$ and decays by multiplying $0.1$ every 30 epochs.

\subsection{Results}

\subsubsection{Communication Rounds Reduction}\label{subsection:baseline}
The results of convergence and pulling communication rounds reduction of all methods are shown in Figure~\ref{fig:CommReduction}. The results show that PRLC has lower convergence rate in the first stages than NSGD. But, PRLC degrades rapidly after the learning rate being decayed and finally converges to the same loss floor which corresponds to the analysis of Theorem~\ref{Theorem:T1}. 
As to the pulling communication rounds reduction, it clearly shows that 
PRLC achieves a better result than 
LAG-PS which has no reduction at all. This is because reusing gradient of 
LAG-PS does not hold in the stochastic cases where the gradients vary largely.
Compared to NSGD, PRLC requires 
approximate $0.5$ pulling communication rounds when reaching convergence 
for both LR and ResNet18 models.

\begin{figure}[htbp]
    \subfigure[LR]{
    \label{fig1:ImpactRatioLR}
    \begin{minipage}[t]{0.46\columnwidth}
        \includegraphics[width=1.1\columnwidth, height=0.8\columnwidth]{./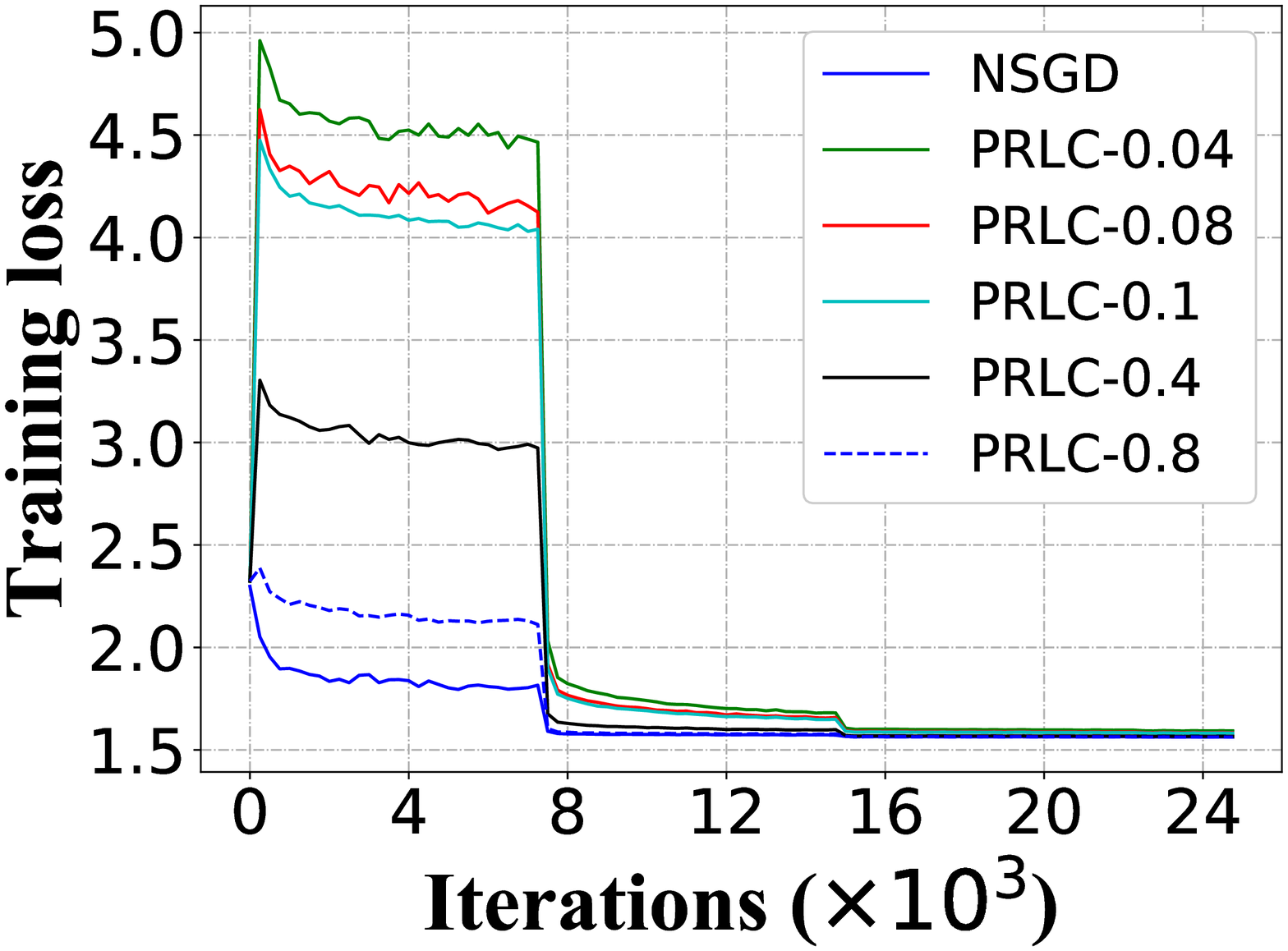} 
    \end{minipage}
    }
    \subfigure[ResNet18]{
    \label{fig2:ImpactRatioResNet18}
    \begin{minipage}[t]{0.46\columnwidth}
        \includegraphics[width=1.1\columnwidth, height=0.8\columnwidth]{./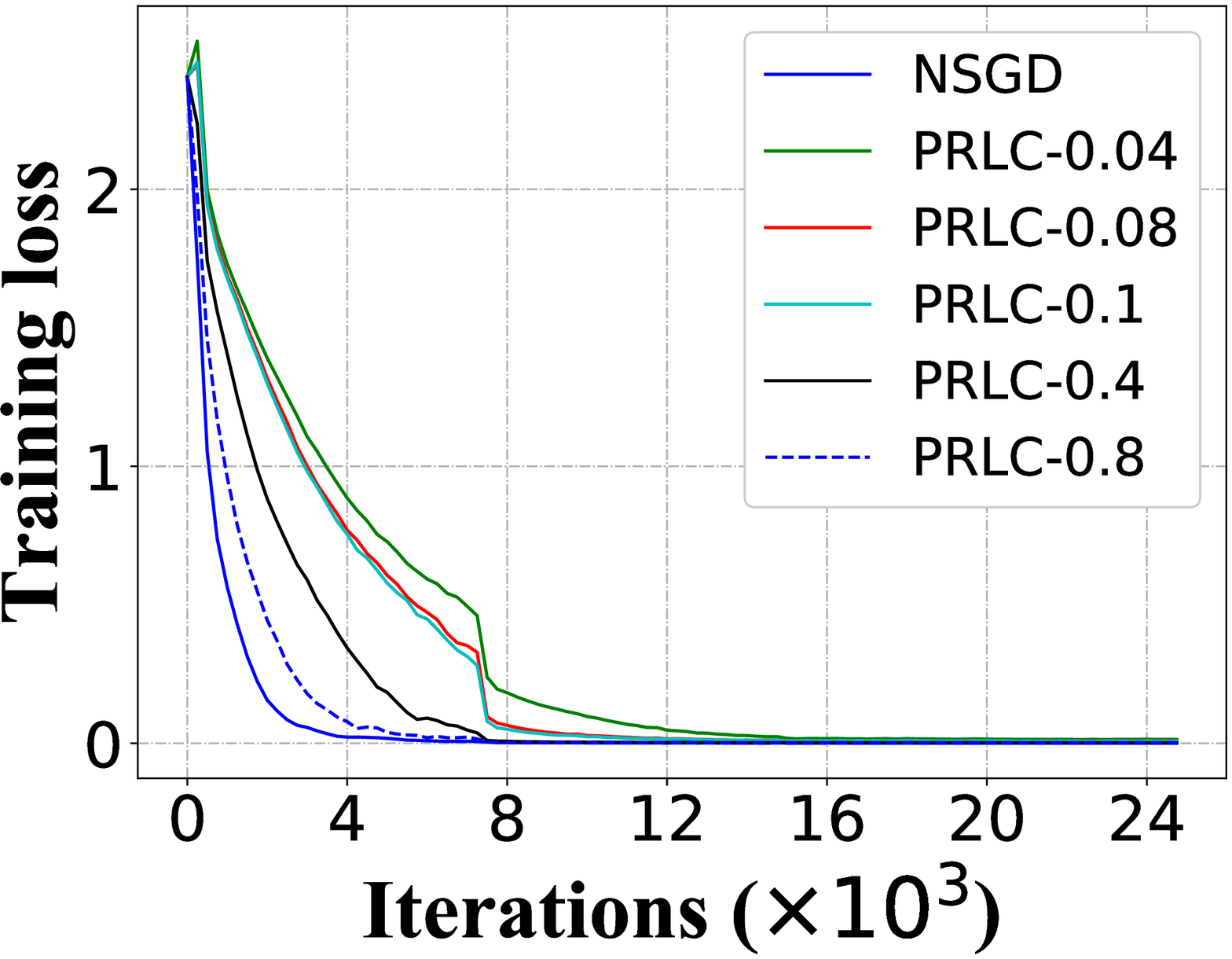}
    \end{minipage}
    }
    \caption{Impact of pulling ratio. PRLC-$x$ in the legend denotes that $x$ pulling ratio is adopted.}
    \label{fig:ImpactRatio}
\end{figure}

\subsubsection{Impact of Pulling Ratio}\label{subsection:PullRatio}
The impact of different pulling ratios is shown in Figure~\ref{fig:ImpactRatio}.
The term $A_{\eta,L,r,G,\sigma,P}$ in Theorem~\ref{Theorem:T1} for PRLC shows that the convergence gap of PRLC grows as the pulling ratio increases but could be reduced in a second-order speed by decaying learning rate. The results verify the analysis, in which the convergence loss gap between different ratios is large in the first stage but reduces significantly as the learning rate decays. 

\begin{figure}[htbp]
    \subfigure[LR]{
    \label{fig1:ImpactCompensationLR}
    \begin{minipage}[t]{0.46\columnwidth}
        \includegraphics[width=1.1\columnwidth, height=0.8\columnwidth]{./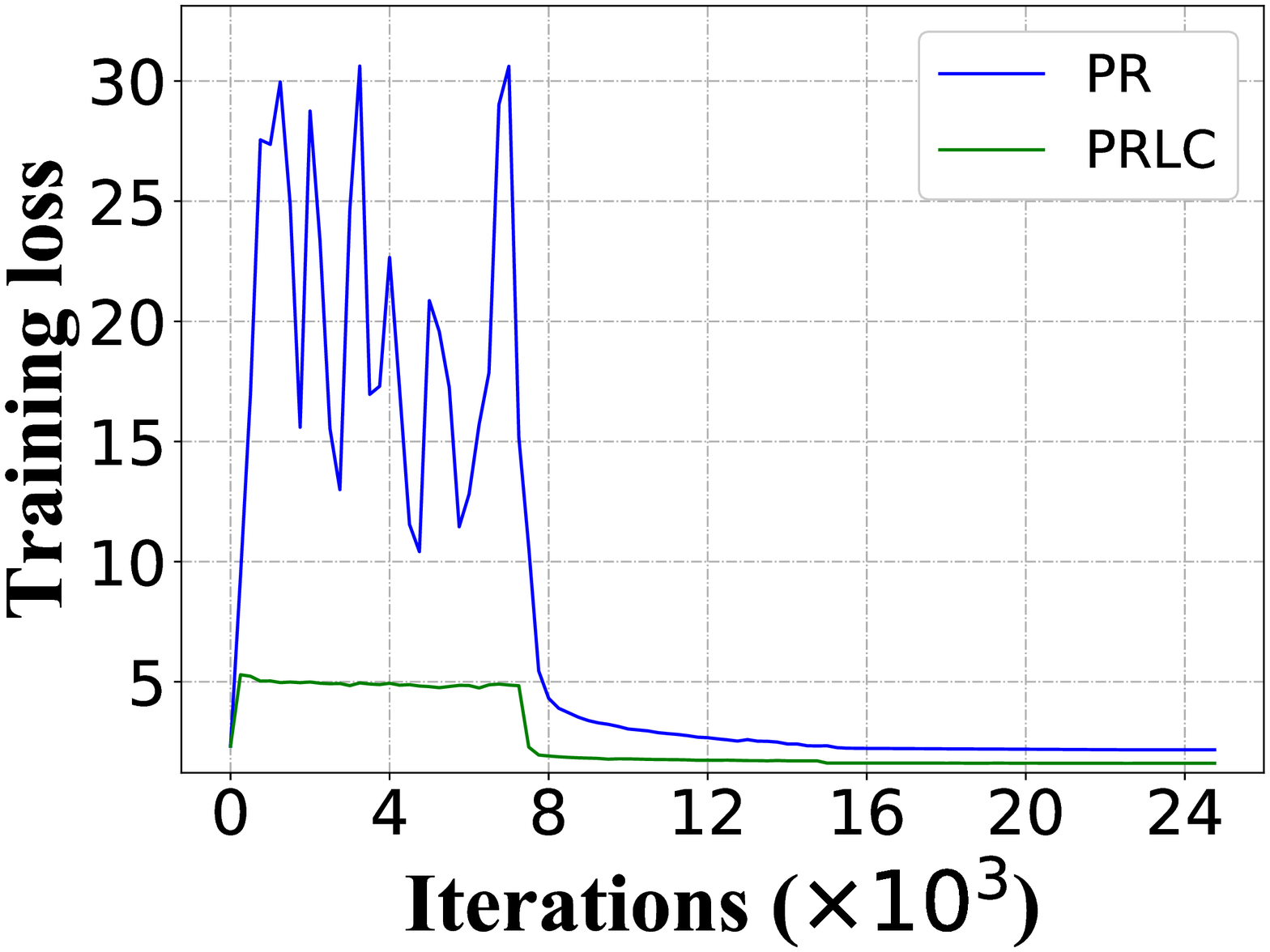} 
    \end{minipage}
    }
    \subfigure[ResNet18]{
    \label{fig2:ImpactCompensationResNet18}
    \begin{minipage}[t]{0.46\columnwidth}
        \includegraphics[width=1.1\columnwidth, height=0.8\columnwidth]{./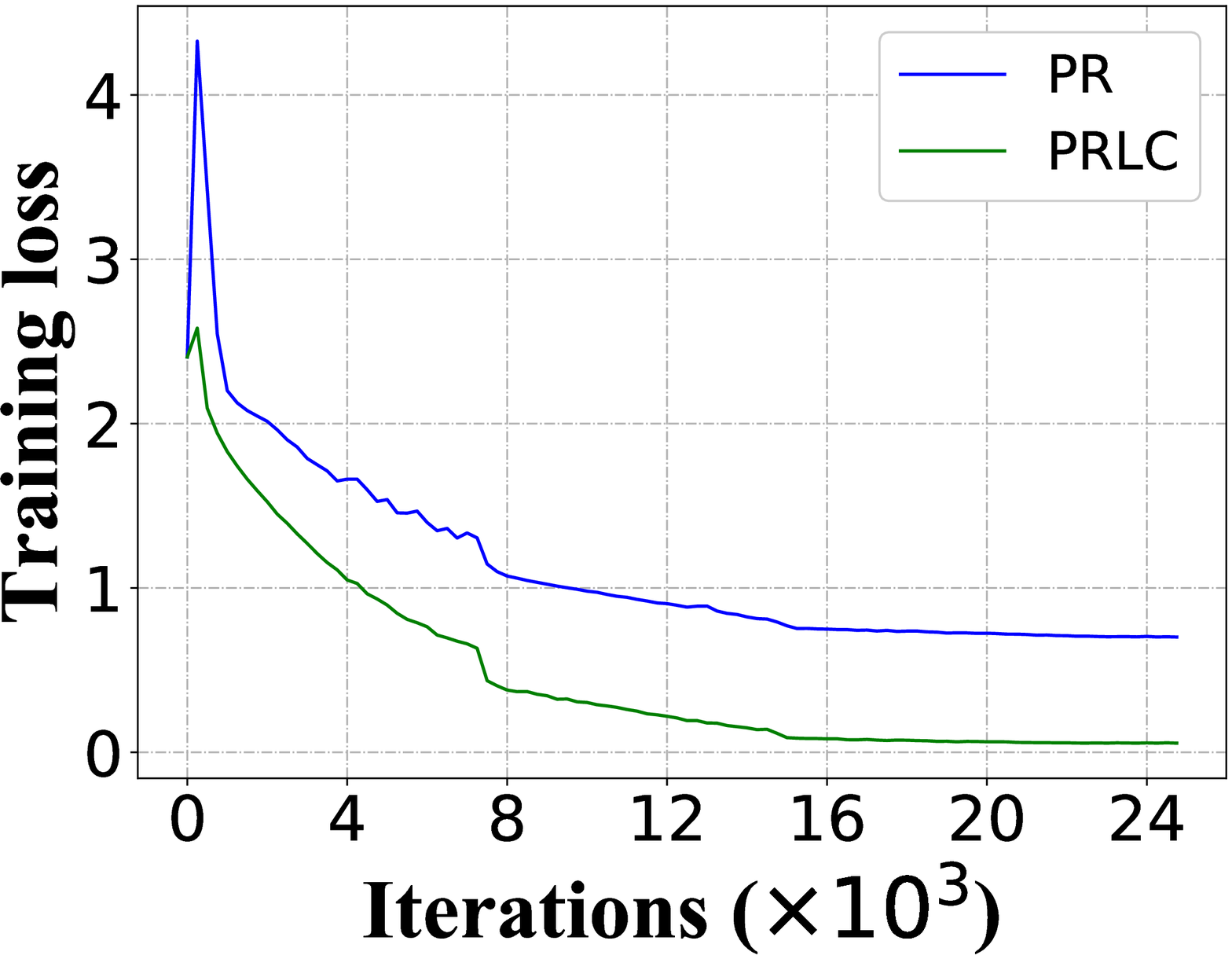}
    \end{minipage}
    }
    \caption{Impact of Compensation. The number of workers in the experiment are 20.
             PR and PRLC are compared in a pulling ratio of $r=0.01$.}
    \label{fig:ImpactCompensation}
\end{figure}

\subsubsection{Impact of Compensation}\label{subsection:Compensation}
Figure~\ref{fig:ImpactCompensation} shows the impact of compensation. 
For both convex and non-convex models, PRLC outperforms PR significantly. 
This is because PRLC compensates the update gap in the intermittent iterations 
with local update while PR not. Consequently, the local compensation of PRLC 
de-facto accelerates the convergence of PRLC by compensating the gap of 
intermittent pulling of the workers. 

\subsubsection{Performance}\label{subsection:Performance}
Finally, we compare PRLC to ASGD in terms of the time. The 
experiments are done in a simulated edge environment described in 
subsection~\ref{label:ExpSetting}.
The results are shown in Figure~\ref{fig:TimePerformance}. 
PRLC performs nearly $30\%$ better than ASGD because approximate 
$10\times10^4 s$ and $14 \times 10^4 s$ are required by 
PRLC and ASGD respectively when reaching convergence. Predictably, the improvement of PRLC over ASGD would become larger in the environment with lower bandwidth. 

\begin{figure}[htbp]
    \centering
    \includegraphics[width=0.9\columnwidth, height=0.486\columnwidth]{./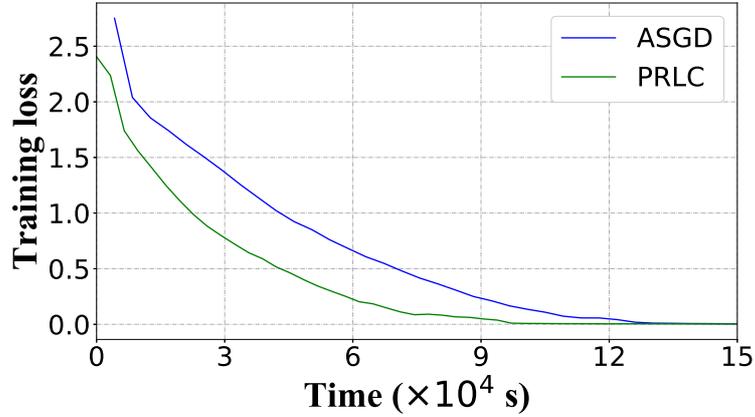}
    \caption{Performance comparison with ASGD. 
    The pulling ratio of PRLC is set to be $0.4$.}
    \label{fig:TimePerformance}
\end{figure}

\section{Conclusion and Future Work}\label{Section:Conclusion}
In this paper, we propose a novel method for reducing pulling communication rounds, called PRLC. In PRLC, each worker intermittently pulls the global model from the server and compensates the gap with its local update when not pulling the model. We establish the convergence theory of PRLC for both strongly convex and general non-convex cases. 
The theoretical results show that the convergence rate of PRLC is in the same order as the non-reduction method. Besides, we also prove that our method has a better 
tolerance for low pulling ratio over the reduction method without local compensation 
and has better scalability than ASGD.
To validate the efficiency of our method, extensive experiments are conducted for both strongly-convex and non-convex machine learning models by using general datasets. 
Experimental results present that PRLC significantly improves the efficiency in 
terms of the reduction of both the communication rounds and the convergence time.

    In the future, we try to consider more factors of the Federated Learning scenarios. 
For example, Non-IID of the datasets between workers is one of the main property in the edge environment. However, our method has not taken this property into account which still has a lot of room for improvement.

\clearpage    



\bibliographystyle{ieeetr}
\bibliography{reference}

\newpage
\appendix
\section{General Lemmas}
In this section, we present the lemma that is general for proofs of all theorems.
\begin{lemma}\label{LM:BoundGradient}
Let $\xi$ be the set of mini-batches, we have $\mathbb{E}_{\xi} [\| \sum_{i = 1}^P g(\omega_t^i; \xi_t^i) \|^2] \le 2P\sigma^2 + 2\sum_{i=1}^P \| \nabla F(\omega_t^i)  \| ^2$ for any iteration $t$.
\end{lemma}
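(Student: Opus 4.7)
The plan is to exploit the standard bias--variance decomposition of each stochastic gradient. For every worker $i$, write $g(\omega_t^i;\xi_t^i) = \nabla F(\omega_t^i) + \epsilon_t^i$, where $\epsilon_t^i := g(\omega_t^i;\xi_t^i) - \nabla F(\omega_t^i)$. Assumption~4 guarantees $\mathbb{E}_{\xi}[\epsilon_t^i] = 0$ and Assumption~5 gives $\mathbb{E}_\xi \|\epsilon_t^i\|^2 \le \sigma^2$. Crucially, the mini-batches $\xi_t^i$ are drawn independently across workers, so the noise vectors $\epsilon_t^1,\ldots,\epsilon_t^P$ are mutually independent (conditionally on $\omega_t^1,\ldots,\omega_t^P$). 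The whole proof will be driven by this decomposition plus the elementary inequality $\|a+b\|^2 \le 2\|a\|^2 + 2\|b\|^2$.

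First I would split off the noise: applying $\|a+b\|^2 \le 2\|a\|^2 + 2\|b\|^2$ to $\sum_i g(\omega_t^i;\xi_t^i) = \sum_i \nabla F(\omega_t^i) + \sum_i \epsilon_t^i$, I obtain
\begin{equation*}
\Bigl\|\sum_{i=1}^P g(\omega_t^i;\xi_t^i)\Bigr\|^2 \le 2\Bigl\|\sum_{i=1}^P \nabla F(\omega_t^i)\Bigr\|^2 + 2\Bigl\|\sum_{i=1}^P \epsilon_t^i\Bigr\|^2.
\end{equation*}
Taking $\mathbb{E}_\xi$ and expanding the second term, independence kills all cross-products, so $\mathbb{E}_\xi\bigl\|\sum_i \epsilon_t^i\bigr\|^2 = \sum_i \mathbb{E}_\xi\|\epsilon_t^i\|^2 \le P\sigma^2$, which produces the $2P\sigma^2$ in the stated bound. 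For the deterministic first term I would then pull the norm inside the sum via Cauchy--Schwarz, which accounts for the second term $\sum_i \|\nabla F(\omega_t^i)\|^2$ in the claim.

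The main obstacle I anticipate is a constant-tracking subtlety at the last step. The natural Cauchy--Schwarz inequality gives $\|\sum_i \nabla F(\omega_t^i)\|^2 \le P\sum_i \|\nabla F(\omega_t^i)\|^2$, which would make the final coefficient $2P$ rather than the $2$ printed in the lemma. To match the statement exactly one instead has to apply $\|a+b\|^2 \le 2\|a\|^2 + 2\|b\|^2$ termwise inside each summand (i.e., under the interpretation that the left-hand norm is distributed over the sum), using independence of $\epsilon_t^i$ across $i$ to collapse the noise sum. So before grinding out constants I would carefully re-examine whether the outer norm on the left really encloses the entire sum or was meant to be distributed; this is the one bookkeeping point I would nail down, after which the rest is just applying Assumptions~4 and~5.
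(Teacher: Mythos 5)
Your proof is correct and follows essentially the same route as the paper: the paper also writes $\sum_i g(\omega_t^i;\xi_t^i)$ as $\sum_i[g(\omega_t^i;\xi_t^i)-\nabla F(\omega_t^i)]+\sum_i\nabla F(\omega_t^i)$, applies $\|a+b\|^2\le 2\|a\|^2+2\|b\|^2$, and uses independence of the $\xi_t^i$ across workers to collapse the noise term to $2P\sigma^2$. Your suspicion about the last term is well founded: the paper's own proof stops at $2P\sigma^2+2\,\mathbb{E}_\xi\bigl\|\sum_{i=1}^P\nabla F(\omega_t^i)\bigr\|^2$ (norm of the sum), never converts it to $2\sum_i\|\nabla F(\omega_t^i)\|^2$, and it is the norm-of-the-sum form that is actually invoked in the proof of Theorem~1; the printed lemma statement is a typo, and the coefficient-$2P$ penalty from Cauchy--Schwarz you identified is exactly why the sum-of-norms form does not follow from this argument.
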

\begin{proof}
By taking the expectation of the stochastic gradient with respect to $\xi$, we get that
\begin{align}
{} & \mathbb{E}_{\xi} [\| \sum_{i = 1}^P g(\omega_t^i; \xi_t^i) \|^2] \nonumber\\
= & \mathbb{E}_{\xi} [\| \sum_{i = 1}^P  g(\omega_t^i; \xi_t^i) - \sum_{i = 1}^P  \nabla F(\omega_t^i) + \sum_{i = 1}^P  \nabla F(\omega_t^i)  \| ^2] \nonumber\\
\le & 2 \mathbb{E}_{\xi} \| \sum_{i = 1}^P  [g(\omega_t^i; \xi_t^i) -  \nabla F(\omega_t^i)] \|^2 + 2 \mathbb{E}_{\xi} \| \sum_{i = 1}^P  \nabla F(\omega_t^i)  \| ^2 \nonumber\\
\overset{(a)}{\le} & 2 \sum_{i = 1}^P  \mathbb{E}_{\xi} [\| g(\omega_t^i; \xi_t^i) - \nabla F(\omega_t^i) \|^2] + 2 \mathbb{E}_{\xi} \| \sum_{i = 1}^P  \nabla F(\omega_t^i)  \| ^2 \nonumber\\
\overset{(b)}{\le} & 2P \sigma^2 + 2 \mathbb{E}_{\xi} \| \sum_{i = 1}^P  \nabla F(\omega_t^i)  \| ^2
\end{align}
where (a) follows according to that $\xi_t^i$ are i.i.d. and the summation rule of variance for independent variables ($\texttt{var}(a+b) = \texttt{var}(a) + \texttt{var}(b)$), and (b) comes after the assumption of bounded variance.
\end{proof}


\section{Proofs for PRLC}

Different from sequential SGD mechanisms, in our algorithm the global parameter and local parameter are not exactly the same for each worker, since each worker randomly pulls the global parameter and updates its own model. Therefore, we should derive the bound of the differential between global parameter and local parameter.

For any worker $i$ and iteration $t$, if worker $i$ last updated the global parameter in iteration $t-k$, where $k = 0,1,\ldots, t-1$, i.e., $\omega_{t-k}^i = \omega_{t-k}$ then we have:
\begin{equation}
\label{equ:localP}
\omega_{t}^i = \omega_{t-k} - \sum_{j=1}^k \eta g(\omega_{t-j}^i; \xi_{t-j}^i)
\end{equation}
and
\begin{equation}
\label{equ:globalP}
\omega_t = \omega_{t-k} - \frac{\eta}{p}\sum_{j=1}^k \sum_{i=1}^P \eta g(\omega_{t-j}^i; \xi_{t-j}^i)
\end{equation}
Therefore, we get that:
\begin{align}
\label{equ:parameterGap}
{} & \| \omega_t - \omega_t^i \|^2 \nonumber\\
 = & \| \sum_{j=1}^k \eta g(\omega_{t-j}^i; \xi_{t-j}^i) - \frac{\eta}{P}\sum_{j=1}^k \sum_{i=1}^P \eta g(\omega_{t-j}^i; \xi_{t-j}^i) \|^2 \nonumber\\
 \le & 2 \eta^2 \|\sum_{j=1}^k g(\omega_{t-j}^i; \xi_{t-j}^i)\|^2 + \frac{2\eta^2}{P^2} \|\sum_{j=1}^k \sum_{i = 1}^P g(\omega_{t-j}^i; \xi_{t-j}^i)\|^2 \nonumber\\
\end{align}

The expectation of the gap between the gradient with respect to global parameter and that with respect to local parameter has the following property:
\begin{lemma} \label{LM:GradientDifference}
For any worker $i$ and iteration $t$, we have: 
\begin{align}
{} & \mathbb{E}_\xi \left[ \| \nabla F(\omega_t) - \nabla F(\omega_t^i) \|^2 \right] \le \frac{4\eta^2 L^2 G^2 (1-r)(2-r)}{r^2} \nonumber\\
+ & \frac{4\eta^2 L^2 r}{P^2} \sum_{\ell = 0}^{t-1} \sum_{m=1}^\ell [(1-r)^{t-m}(t-m) \mathbb{E}_\xi \| \sum_{i=1}^P g(\omega_\ell^i; \xi_\ell^i) \|^2 ] \nonumber
\end{align}
\end{lemma}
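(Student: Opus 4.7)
The plan is to start from the bound \eqref{equ:parameterGap}, which already decomposes $\|\omega_t - \omega_t^i\|^2$ into a ``local'' piece $2\eta^2 \|\sum_{j=1}^k g(\omega_{t-j}^i; \xi_{t-j}^i)\|^2$ and a ``global'' piece $\frac{2\eta^2}{P^2} \|\sum_{j=1}^k \sum_{i=1}^P g(\omega_{t-j}^i; \xi_{t-j}^i)\|^2$. Applying the $L$-smooth assumption gives $\|\nabla F(\omega_t) - \nabla F(\omega_t^i)\|^2 \le L^2 \|\omega_t - \omega_t^i\|^2$, so it suffices to take expectations of each of these two pieces, over both the mini-batches $\xi$ and the random staleness $k$, and match them against the two summands in the stated bound.

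The key auxiliary fact I would establish first is the distribution of the staleness $k$. Because each past iteration resulted in a pull with probability $r$ independently, the ``time since the last pull'' satisfies $\mathbb{P}(k=j) = r(1-r)^j$ for $j \ge 0$; a direct computation then gives $\mathbb{E}[k^2] = (1-r)(2-r)/r^2$, which is exactly the constant appearing in the first summand of the lemma. For the local piece I would then apply Cauchy--Schwarz in the form $\|\sum_{j=1}^k g_j\|^2 \le k \sum_{j=1}^k \|g_j\|^2$ followed by the bounded-gradient assumption to obtain $k^2 G^2$; taking $\mathbb{E}_k$ and multiplying by $2\eta^2 L^2$ yields (up to the displayed absolute constant) the first summand of the lemma.

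For the global piece I would again apply Cauchy--Schwarz in $j$ to get $\|\sum_{j=1}^k \sum_i g\|^2 \le k \sum_{j=1}^k \|\sum_i g(\omega_{t-j}^i; \xi_{t-j}^i)\|^2$, then interchange the geometric expectation over $k$ with the inner summation over $j$ and re-index via $\ell = t - j$. This converts the expectation over the random staleness into a weighted sum over historical iteration indices $\ell \in \{0,\dots,t-1\}$, with weights of the form $(1-r)^{t-m}(t-m)$ after the substitution $m = t-j$, and the prefactor $r$ inherited directly from the probability mass of the geometric law. Multiplying by $\frac{2\eta^2 L^2}{P^2}$ then produces the second summand of the lemma essentially as stated.

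The main obstacle I anticipate is precisely this bookkeeping in the global piece: exchanging the expectation over $k$ with the inner summation, truncating at the valid range $\ell \ge 0$, and re-indexing so that the coefficient lands in the double-sum form $\sum_{\ell=0}^{t-1}\sum_{m=1}^{\ell}(1-r)^{t-m}(t-m)$. Once this algebraic step is executed cleanly, the rest of the argument is a direct application of the $L$-smoothness bound, the bounded-gradient assumption, and the moments of the geometric distribution already computed.
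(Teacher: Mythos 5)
Your proposal follows essentially the same route as the paper's proof: bound $\|\nabla F(\omega_t)-\nabla F(\omega_t^i)\|^2$ by $L^2\|\omega_t-\omega_t^i\|^2$, invoke the decomposition in (\ref{equ:parameterGap}), use the geometric law $\mathbb{P}(k=\ell)=r(1-r)^\ell$ with $\mathbb{E}[k^2]=(1-r)(2-r)/r^2$ together with Cauchy--Schwarz and the bounded-gradient assumption for the local piece, and the same Cauchy--Schwarz plus re-indexing of the expectation over $k$ into the double sum $\sum_{\ell=0}^{t-1}\sum_{m=1}^{\ell}(1-r)^{t-m}(t-m)$ for the global piece. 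The only discrepancy is a harmless constant: you obtain prefactors $2\eta^2L^2$ where the paper writes $4\eta^2L^2$ (it uses the looser step $\|\nabla F(\omega_t)-\nabla F(\omega_t^i)\|^2\le 2L^2\|\omega_t-\omega_t^i\|^2$), which you already flag as an absolute-constant issue.
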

\begin{proof}
With the $L$-smooth assumption and (7), we have:
\begin{align}
\label{eq:tempq}
{} & \mathbb{E}_{\xi} \left[ \| \nabla F(\omega_t) - \nabla F(\omega_t^i) \|^2 \right]
\le 2L^2 \mathbb{E}_{\xi} \left[ \| \omega_t - \omega_t^i \|^2 \right]\ \nonumber\\
\le & 4\eta^2 L^2  \mathbb{E}_{k,\xi} \|\sum_{j=1}^k g(\omega_{t-j}^i; \xi_{t-j}^i)\|^2 
 + \frac{4\eta^2 L^2}{P^2} \mathbb{E}_{k,\xi} \|\sum_{j=1}^k \sum_{i = 1}^P g(\omega_{t-j}^i; \xi_{t-j}^i)\|^2
\end{align}
Note that $k$ is a random variable, and for all $\ell = 0, 1, \ldots, t-1$, we have $\mathbb{P}[k = \ell] = r(1-r)^\ell$, where $r$ is the probability of pulling global parameter in each iteration. With the assumption of bounded gradient, we have:
\begin{align}
\label{equ:expectation1}
{} & \mathbb{E}_{\{k,\xi\}} \|\sum_{j=1}^k g(\omega_{t-j}^i; \xi_{t-j}^i)\|^2 \nonumber\\
=  & \sum_{\ell = 0}^{t-1} [ \mathbb{P}[k = \ell] \mathbb{E}_{\xi} \|\sum_{j=1}^\ell g(\omega_{t-j}^i; \xi_{t-j}^i)\|^2 ] \nonumber\\
=  & \sum_{\ell = 0}^{t-1} [ r(1-r)^\ell \mathbb{E}_{\xi}  \|\sum_{j=1}^\ell g(\omega_{t-j}^i; \xi_{t-j}^i)\|^2 ] \nonumber\\
\le  & r \sum_{\ell = 0}^{t-1} [ (1-r)^\ell \ell \sum_{j=1}^\ell \mathbb{E}_{\xi}  \|g(\omega_{t-j}^i; \xi_{t-j}^i)\|^2 ] \nonumber\\
\overset{(a)}{\le} & r G^2 \sum_{\ell = 0}^{t-1} (1-r)^\ell \ell^2 \overset{(b)}{\le} \frac{(1-r)(2-r)G^2}{r^2}
\end{align}
where (a) comes after bounded gradient assumption, and (b) follows according to that $\lim_{t \rightarrow \infty}\sum_{\ell = 0}^{t-1} (1-r)^\ell \ell^2 = \frac{(1-r)(2-r)}{r^3}$.

Similar to (\ref{equ:expectation1}),
\begin{align}
\label{equ:expectation2}
{} & \mathbb{E}_{\{k,\xi\}} \|\sum_{j=1}^k \sum_{i = 1}^P g(\omega_{t-j}^i; \xi_{t-j}^i)\|^2 \nonumber\\
\le  & r \sum_{\ell = 0}^{t-1} [ (1-r)^\ell \ell \mathbb{E}_{\xi} \sum_{j=1}^\ell \|\sum_{i = 1}^P g(\omega_{t-j}^i; \xi_{t-j}^i)\|^2 ] \nonumber\\
= & r \sum_{\ell = 0}^{t-1} \sum_{m=1}^\ell [(1-r)^{t-m}(t-m) \mathbb{E}_{\xi} \| \sum_{i=1}^P g(\omega_\ell^i; \xi_\ell^i) \|^2 ]
\end{align}
Considering that variable $j$ is constrained by variable $\ell$, the coefficient of $\mathbb{E}_{\xi} \| \sum_{i=1}^P g(\omega_\ell^i; \xi_\ell^i) \|$ is $r \sum_{m=1}^\ell (1-r)^{t-m}(t-m)$ for any $\ell \le t$. Therefore, the last step of above equation holds.

Replacing $\mathbb{E}_{\{k,\xi\}} \|\sum_{j=1}^k g(\omega_{t-j}^i; \xi_{t-j}^i)\|^2$ and $\mathbb{E}_{\{k,\xi\}} \|\sum_{j=1}^k \sum_{i = 1}^P g(\omega_{t-j}^i; \xi_{t-j}^i)\|^2$ in (\ref{eq:tempq}) immediately yields the result. 
\end{proof}

\begin{lemma}
Given any iteration $T$, the bound of the summation $\sum_{t = 1}^T \mathbb{E} [ \| \nabla F(\omega_t) - \nabla F(\omega_t^i) \|^2 ]$ can be formulated by location gradient.
\end{lemma}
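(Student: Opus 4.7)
The plan is to build directly on Lemma 2 (GradientDifference), which already bounds each per-iteration term $\mathbb{E}_\xi[\|\nabla F(\omega_t)-\nabla F(\omega_t^i)\|^2]$ by a constant piece plus a double sum in which the only random quantity is $\mathbb{E}_\xi\|\sum_{i=1}^P g(\omega_\ell^i;\xi_\ell^i)\|^2$. Summing that bound from $t=1$ to $T$ turns the constant piece into $\frac{4\eta^2 L^2 G^2(1-r)(2-r)}{r^2}\,T$ and produces a triple sum $\frac{4\eta^2 L^2 r}{P^2}\sum_{t=1}^T\sum_{\ell=0}^{t-1}\sum_{m=1}^\ell(1-r)^{t-m}(t-m)\,\mathbb{E}\|\sum_{i}g(\omega_\ell^i;\xi_\ell^i)\|^2$. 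The whole job of the lemma is to collapse this triple sum into a single sum indexed by $\ell$ whose coefficient is a pure constant in $r$; then a final application of Lemma 1 (BoundGradient) turns $\mathbb{E}\|\sum_i g(\omega_\ell^i;\xi_\ell^i)\|^2$ into $2P\sigma^2+2\sum_i\|\nabla F(\omega_\ell^i)\|^2$, giving the desired expression in terms of the local (``location'') gradients.

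Concretely, first I would exchange the order of summation so that $\ell$ becomes the outer index. For a fixed $\ell\in\{0,\dots,T-1\}$ the coefficient of $\mathbb{E}\|\sum_i g(\omega_\ell^i;\xi_\ell^i)\|^2$ becomes $\sum_{t=\ell+1}^{T}\sum_{m=1}^\ell(1-r)^{t-m}(t-m)$. With the change of variable $s=t-m$ this rearranges into a weighted geometric series, and I would bound it by its infinite-horizon value using the two identities $\sum_{s\ge 0}(1-r)^s s=(1-r)/r^2$ and $\sum_{s\ge 0}(1-r)^s s^2=(1-r)(2-r)/r^3$ (the same identities already invoked in the proof of Lemma 2). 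The resulting coefficient is bounded by a constant of the form $C(r)\cdot(1-r)(2-r)/r^3$, independent of both $\ell$ and $T$, which is exactly what is needed to prevent the bound from blowing up.

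After that substitution, the inequality takes the shape $\sum_{t=1}^T\mathbb{E}[\|\nabla F(\omega_t)-\nabla F(\omega_t^i)\|^2]\le \alpha_1 T+\alpha_2\sum_{\ell=1}^T\mathbb{E}\|\sum_{i=1}^P g(\omega_\ell^i;\xi_\ell^i)\|^2$ for explicit constants $\alpha_1,\alpha_2$ depending only on $\eta,L,r,P,G$. Applying Lemma 1 termwise to each $\mathbb{E}\|\sum_i g(\omega_\ell^i;\xi_\ell^i)\|^2$ turns the second sum into $2P\sigma^2 T+2\sum_{\ell=1}^T\sum_{i=1}^P\|\nabla F(\omega_\ell^i)\|^2$, producing a bound of the form $\sum_{t=1}^T\mathbb{E}[\|\nabla F(\omega_t)-\nabla F(\omega_t^i)\|^2]\le \beta_0 T+\beta_1\sum_{t=1}^T\sum_{i=1}^P\|\nabla F(\omega_t^i)\|^2$, i.e.\ a formulation purely in terms of the local gradients, as the lemma asserts. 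This is precisely the form consumed by the main recursion leading to Theorem~\ref{Theorem:T1}, where the term $\sum_i\|\nabla F(\omega_t^i)\|^2$ is then absorbed by the negative contribution $\frac{2L\eta^2-\eta}{2P^2}\sum_{t,i}\|\nabla F(\omega_t^i)\|^2$ via the step-size constraint that forces $B_{\eta,L,r,P}<0$.

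The main obstacle is the combinatorial bookkeeping when swapping the three nested summations and then bounding $\sum_{t>\ell}\sum_{m\le\ell}(1-r)^{t-m}(t-m)$ uniformly in $\ell$ and $T$: the tempting off-by-one errors in the indices $(t,\ell,m)$ have to be resolved carefully, and the geometric identities must be invoked in a way that yields the correct exponents of $r$, since even a factor of $1/r$ discrepancy would break the step-size condition~(\ref{equ:eta}). Once that bookkeeping is pinned down, the rest is substitution into the bound of Lemma 2, linearity of expectation, and a direct application of Lemma 1.
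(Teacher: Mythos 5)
Your proposal matches the paper's own argument: the paper likewise sums the bound of Lemma~\ref{LM:GradientDifference} over $t$, collapses the resulting triple sum into a single sum with a coefficient bounded by $(1-r)(2-r)/r^3$ via the same geometric-series identity (it counts that each term occurs at most $t-m$ times rather than explicitly exchanging the summation order, but the bookkeeping is equivalent), and arrives at the same bound $\frac{4\eta^2L^2G^2T(1-r)(2-r)}{r^2}+\frac{4\eta^2L^2(1-r)(2-r)}{P^2r^2}\sum_{t=1}^{T-1}\mathbb{E}\big\|\sum_{i=1}^{P}g(\omega_t^i;\xi_t^i)\big\|^2$. The only cosmetic difference is that the paper stops there and defers the substitution via Lemma~\ref{LM:BoundGradient} to the proof of Theorem~\ref{Theorem:T1}, whereas you fold that final step into the lemma itself.
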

\begin{proof}
\begin{align}
{} & \sum_{t=1}^T \sum_{\ell = 0}^{t-1} \sum_{m=1}^\ell [(1-r)^{t-m}(t-m) \mathbb{E}_\xi \| \sum_{i=1}^P g(\omega_\ell^i; \xi_\ell^i) \|^2] \nonumber\\
\le & \sum_{t=1}^{T-1} \sum_{m=1}^{T-1}[(1-r)^{t-m}(t-m)^2 \mathbb{E}_\xi \| \sum_{i=1}^P g(\omega_\ell^i; \xi_\ell^i) \|^2] \nonumber\\
\le & \frac{(1-r)(2-r)}{r^3}\sum_{t=1}^{T-1} \mathbb{E}_\xi \| \sum_{i=1}^P g(\omega_\ell^i; \xi_\ell^i) \|^2
\end{align}
where the first inequality follows according to that the part $(1-r)^{t-m}(t-m) \mathbb{E}_\xi \| \sum_{i=1}^P g(\omega_\ell^i; \xi_\ell^i) \|^2$ occurs at most $t-m$ times, considering the constraints of $m \le \ell$ and $\ell < t$.

With the result of Lemma~\ref{LM:GradientDifference}, we have:
\begin{align}
\label{equ:lemma4}
{} & \sum_{t=1}^T \mathbb{E}_\xi \left[ \| \nabla F(\omega_t) - \nabla F(\omega_t^i) \|^2 \right] \nonumber\\
\le & \frac{4\eta^2 L^2 G^2 T (1-r)(2-r)}{r^2} 
+ \frac{4\eta^2 L^2 (1-r)(2-r)}{P^2 r^2} \sum_{t=1}^{T-1} \mathbb{E}_\xi \| \sum_{i=1}^P g(\omega_\ell^i; \xi_\ell^i) \|^2
\end{align}
\end{proof}

\subsection{Convergence proof for Theorem 1}
\begin{proof}
Since function $F$ is $L$-smooth, for any $t>0$ we get that
\begin{align}
{} & F(\omega_{t+1}) - F(\omega_t) \nonumber\\
\le & \nabla F(\omega_t)(\omega_{t+1}-\omega_t)^T + \frac{L}{2}\|\omega_{t+1}-\omega_t\|^2 \nonumber\\
= & \nabla F(\omega_t)(-\frac{\eta}{P}\sum_{i=1}^P g(\omega_t^i;\xi_t^i))^T + \frac{L}{2}\|\frac{\eta}{P}\sum_{i=1}^P g(\omega_t^i;\xi_t^i)\|^2 \nonumber\\
\end{align}
Taking the expectation for both side with respect to $\xi$, we have
\begin{align}
{} & \mathbb{E} [F(\omega_{t+1}) - F(\omega_t)] \nonumber\\
= & -\frac{\eta}{2}\mathbb{E}\|\nabla F(\omega_t)\|^2 - \frac{\eta}{2P^2}\mathbb{E}\|\sum_{i=1}^P \nabla F(\omega_t^i)\|^2 \nonumber\\
+ &  \frac{\eta}{2}\mathbb{E}\|\nabla F(\omega_t) - \frac{1}{P}\sum_{i=1}^P \nabla F(\omega_t^i)\|^2 + \frac{L\eta^2}{2P^2}\mathbb{E}\|\sum_{i=1}^P g(\omega_t^i;\xi_t^i)\|^2 \nonumber \\
= & -\frac{\eta}{2}\mathbb{E}\|\nabla F(\omega_t)\|^2 + \frac{2 L\eta^2 - \eta}{2P^2}\mathbb{E}\|\sum_{i=1}^P \nabla F(\omega_t^i)\|^2 + \frac{L\eta^2\sigma^2}{P} \nonumber\\
+ & \frac{\eta}{2P^2}\mathbb{E}\| \sum_{i=1}^P [\nabla F(\omega_t) - \nabla F(\omega_t^i)]\|^2 \nonumber \\
= & -\frac{\eta}{2}\mathbb{E}\|\nabla F(\omega_t)\|^2 + \frac{2 L\eta^2 - \eta}{2P^2}\mathbb{E}\|\sum_{i=1}^P \nabla F(\omega_t^i)\|^2 + \frac{L\eta^2\sigma^2}{P} \nonumber\\
+ & \frac{\eta}{2P}\sum_{i=1}^P \mathbb{E}\|  \nabla F(\omega_t) - \nabla F(\omega_t^i)\|^2 \label{bd:OneIterationBDOfF}
\end{align}
Summing up the above equation for $t=1$ to $T$ for both sides, we get that
\begin{align}
{} & \mathbb{E} F(\omega_{t+1}) - F(\omega_1) \nonumber\\
\le & -\frac{\eta}{2}\sum_{t=1}^T \mathbb{E} \|\nabla F(\omega_t)\|^2 + \frac{2 L\eta^2 - \eta}{2P^2}  \sum_{t=1}^T \mathbb{E} \|\sum_{i=1}^P \nabla F(\omega_t^i)\|^2  \nonumber\\
+ & \frac{\eta}{2P}\sum_{i=1}^P \sum_{t=1}^T \mathbb{E} \| \nabla F(\omega_t) - \nabla F(\omega_t^i)\|^2 + \frac{L\eta^2\sigma^2T}{P} \nonumber\\
\le & -\frac{\eta}{2}\sum_{t=1}^T \mathbb{E} \|\nabla F(\omega_t)\|^2 + \frac{2 L\eta^2 - \eta}{2P^2} \mathbb{E} \sum_{t=1}^T \|\sum_{i=1}^P \nabla F(\omega_t^i)\|^2 \nonumber\\
+ & \frac{2\eta^3 L^2 G^2 T (1-r)(2-r)}{r^2} + \frac{L\eta^2\sigma^2T}{P} \nonumber\\
+ & \frac{2\eta^3 L^2 (1-r)(2-r)}{P^2 r^2} \sum_{t=1}^{T-1} \mathbb{E} \| \sum_{i=1}^P g(\omega_\ell^i; \xi_\ell^i) \|^2
\end{align}
where the last inequality follows according to (\ref{equ:lemma4}).

By replacing $\mathbb{E} \| \sum_{i=1}^P g(\omega_\ell^i; \xi_\ell^i) \|^2$ according to the result of Lemma~\ref{LM:BoundGradient}, we get that
\begin{align}
{} & \mathbb{E} F(\omega_{t+1}) - F(\omega_1) \nonumber\\
\le & -\frac{\eta}{2}\sum_{t=1}^T \|\nabla F(\omega_t)\|^2 + A_{\eta,L,r,G,\sigma} T \eta  \nonumber\\
{} & + B_{\eta,L,r,P} \mathbb{E} \sum_{t=1}^T \|\sum_{i=1}^P \nabla F(\omega_t^i)\|^2 \nonumber
\end{align}
where $A_{\eta,L,r,G,\sigma,P} = \frac{2\eta^2 L^2 (1-r)(2-r)(PG^2 + 2\sigma^2) + L \eta \sigma^2 r^2}{r^2P} $ and \\
$B_{\eta,L,r,P} = \frac{8\eta^3 L^2 (1-r)(2-r) + (2L\eta^2 - \eta) r^2}{2P^2 r^2}$ are two constants.

By setting $B_{\eta,L,r,P} < 0$, we can derive the satisfied stepsize, i.e., $0 < \eta \le \frac{-2 L r^2 + \sqrt{4L^2r^4 + 32L^2r^2(1-r)(2-r)}}{16L^2(1-r)(2-r)}$. Then we have
\begin{equation}
\mathbb{E} F(\omega_{t+1}) - F(\omega_1) \le  -\frac{\eta}{2}\sum_{t=1}^T \|\nabla F(\omega_t)\|^2 + A_{\eta,L,r,G,\sigma,P} T \eta
\end{equation}

Since $F(\omega_{T+1})$ is bound by $F(\omega^*)$, we have
\begin{equation}
\label{equ:convergence}
\frac{1}{T}\sum_{t=1}^T \mathbb{E} \|\nabla F(\omega_t)\|^2 \le \frac{2|F(\omega_{1}) - F(\omega^*)|}{\eta T} + 2A_{\eta,L,r,G,\sigma,P}
\end{equation}
\end{proof}

According to (\ref{equ:convergence}), the average norm of gradient converges to a non-zero constant $2A_{\eta,L,r,G,\sigma,P}$ under a fixed learning rate with $T\rightarrow \infty$. The constant $2A_{\eta,L,r,G,\sigma,P}$ is related with the stepsize $\eta$. By diminishing $\eta$ during the learning process, i.e., $\eta = O(1/\sqrt{T})$, it is easy to find that Algorithm 1 has a convergence rate of $O(1/\sqrt{T})$, as shown in the following theorem.

\subsection{Convergence proof for Theorem 2}
\begin{proof}
Considering the right side of (\ref{equ:convergence}) as a function of $\eta$, we have $f(\eta) = \frac{2[F(\omega_1)-F(\omega^*)]}{\eta T} +  2A_{\eta,L,r,G,\sigma,P}$. By ignoring the highest order of $\eta$ in $A_{\eta,L,r,G,\sigma,P}$, when
\begin{equation}
 \eta = \sqrt{\frac{[F\left(\omega_{1}\right)-F\left(\omega^*\right)]P }{L \sigma^{2} T}},
\end{equation}
we have
\begin{equation}
f(\eta) \leq 4 \sqrt{\frac{[F\left(\omega_{1}\right)-F\left(\omega^*\right)] \sigma^{2} L}{P}} * \frac{1}{\sqrt{T}} + O(\frac{1}{T}).
\end{equation}

Combining with the constraint of stepsize in (\ref{equ:eta}), we can derive the condition of $T$ in (\ref{equ:iteration}), which completes the proof.
\end{proof}

\subsection{Convergence proof for Theorem 5}
\begin{proof}
Base on inequality (\ref{bd:OneIterationBDOfF}), we have
\begin{align}
    &\mathbb{E}[F(\omega_{t+1})-F(\omega_t)]   \nonumber \\
    &\le - \frac{\eta }{2}\mathbb{E}{\left\| {\nabla F({\omega _t})} \right\|^2} + \frac{{2L{\eta ^2} - \eta }}{{2{P^2}}}\mathbb{E}{\left\| {\sum\limits_{i = 1}^P {\nabla F(\omega _t^i)} } \right\|^2} + \frac{{L{\eta ^2}{\sigma ^2}}}{P} \nonumber \\
    &+  \frac{\eta }{{2P}}\sum\limits_{i = 1}^P {\mathbb{E}{{\left\| {\nabla F({\omega _t}) - \nabla F(\omega _t^i)} \right\|}^2}} \nonumber
\end{align}
Since $2c(F(\omega) - F(\omega^*)) \le \parallel \nabla F(\omega) \parallel ^2$ in c-strongly convex function, the bound could be reformulated as
\begin{align}
    &\mathbb{E}[F(\omega_{t+1})-F(\omega_t)]   \nonumber \\
       &\le   - \eta c \mathbb{E}[F(\omega_t)-F(\omega^*)] + \frac{{2L{\eta ^2} - \eta }}{{2{P^2}}}\mathbb{E}{\left\| {\sum\limits_{i = 1}^P {\nabla F(\omega _t^i)} } \right\|^2} + \frac{{L{\eta ^2}{\sigma ^2}}}{P} \nonumber \\
       &+   \frac{\eta }{{2P}}\sum\limits_{i = 1}^P {\mathbb{E}{{\left\| {\nabla F({\omega _t}) - \nabla F(\omega _t^i)} \right\|}^2}}. \label{ieq:oneItBound}
\end{align}

Moving $\mathbb{E}F(\omega_t)$ to right side and $\mathbb{F(\omega^*)}$ to left side, the formula (\ref{ieq:oneItBound}) is transformed to

\begin{align}
    &\mathbb{E}[F(\omega_{t+1})-F(\omega^*)]   \\
       &\le   (1-\eta c)\mathbb{E}[F(\omega_t) - F(\omega^*)] + \frac{{2L{\eta ^2} - \eta }}{{2{P^2}}}\mathbb{E}{\left\| {\sum\limits_{i = 1}^P {\nabla F(\omega _t^i)} } \right\|^2} + \frac{{L{\eta ^2}{\sigma ^2}}}{P} \nonumber \\
       &+   \frac{\eta }{{2P}}\sum\limits_{i = 1}^P {\mathbb{E}{{\left\| {\nabla F({\omega _t}) - \nabla F(\omega _t^i)} \right\|}^2}} \nonumber \\
       &\le   (1-\eta c)\mathbb{E}[F(\omega_t) - F(\omega^*)] + \frac{{2L{\eta ^2} - \eta }}{{2{P^2}}}\mathbb{E}{\left\| {\sum\limits_{i = 1}^P {\nabla F(\omega _t^i)} } \right\|^2}  \nonumber \\
       &+ \frac{{L{\eta ^2}{\sigma ^2}}}{P} + \frac{2 \eta^3 L^2 G^2 (1-r)(2-r)}{r^2} \nonumber \\
       &+ \frac{1}{P^2}\left(2 \eta^3 L^2 r \sum\limits_{\ell=0}^{t-1}{\sum\limits_{m=1}^{\ell}{[(1-r)^{t-m} (t-m) \mathbb{E}\parallel \sum\limits_{i=1}^P {g(\omega_{\ell}^i, \xi_{\ell}^i)} \parallel^2 ]}}\right) \nonumber \\
       &\le   (1-\eta c)\mathbb{E}[F(\omega_t) - F(\omega^*)] + \frac{{2L{\eta ^2} - \eta }}{{2{P^2}}}\mathbb{E}{\left\| {\sum\limits_{i = 1}^P {\nabla F(\omega _t^i)} } \right\|^2} + \frac{{L{\eta ^2}{\sigma ^2}}}{P} \nonumber \\
       &+   \frac{2 \eta^3 L^2 G^2 (1-r)(2-r)}{r^2} + \frac{1}{\rm{P}}\left(4{\eta ^3}{L^2}r{\sigma ^2}\sum\limits_{\ell  = 0}^{t - 1} {\sum\limits_{m = 1}^\ell  {{{(1 - r)}^{t - m}}(t - m)} } \right) \nonumber \\
       &+   \frac{1}{P}\left(4{\eta ^3}{L^2}r\sum\limits_{\ell  = 0}^{t - 1} {\sum\limits_{m = 1}^\ell  {{{(1 - r)}^{t - m}}(t - m) \mathbb{E} {{\left\| {\sum\limits_{i = 1}^P {\nabla F(\omega _t^i)} } \right\|}^2}} } \right)\nonumber \\
       &\le   (1-\eta c)\mathbb{E}[F(\omega_t) - F(\omega^*)] + D + H \left\| {\sum\limits_{i = 1}^P {\nabla F(\omega _t^i)} } \right\| \label{ieq:recursiveItBound},
\end{align}
where $D$ and $H$ in (\ref{ieq:recursiveItBound}) are 
$$D= \frac{2 \eta^2 \sigma^2}{P} + \frac{2 \eta^3 L^2 G^2 (1-r) (2-r)}{r^2} + \frac{4 \eta^2 L^2 \sigma^2 (1-r)(2-r)}{P r^2}$$ 
and 
$$H=\frac{2L \eta^2 - \eta}{2P^2} + \frac{4 \eta^3 L^2 (1-r)(2-r)}{P r^2}$$
respectively. The last inequality is due to 
\begin{equation}\label{eq:sumEQ}
    \sum\limits_{\ell=0}^{t-1}{\sum\limits_{m=1}^{\ell}}{(1-t)^{t-m}(t-m)}  =  \sum\limits_{n=1}^{t-1} {(1-r)^n n^2} = \frac{(1-r)(2-r)}{r^3}, 
\end{equation}
as $t$ is large enough.

Setting $H \le 0$, the learning rate satisfies the following inequality
\begin{equation}
\label{bd:lrOfStrongConvex}
0 < \eta \le \frac{-r^2 + 2r \sqrt{r^2 + 16 P (1-r)(2-r)}}{8 P L (1-r)(2-r)}.
\end{equation}

The bound (\ref{ieq:recursiveItBound}) becomes 

\begin{align}
    \mathbb{E}[F(\omega_{t+1}) - F(\omega^*)]  &\le (1-\eta c) \mathbb{E}[F(\omega_{t}) - F(\omega^*)] + D \nonumber \\
    \mathbb{E}[F(\omega_{t+1}) - F(\omega^*)] - \frac{D}{\eta c}   &\le   (1-\eta c)(\mathbb{E}[F(\omega_{t}) - F(\omega^*)] - \frac{D}{\eta c}) \label{bd:recursiveOBD}
\end{align}
Iterating (\ref{bd:recursiveOBD}), Theorem~5 could be easily achieved.
\end{proof}

\section{Proofs for PR}
Before proving the convergence of PR, we present the required lemmas. 
Similar to PRLC, we also give the bound of the difference between local expected gradient and average gradient.

\begin{lemma} \label{LM:PRGradientDifference}
    For iterations from $t=1$ to $T$, the total difference between global gradient and local gradient is bounded by 
    \begin{align}
        &\sum_{t = 1}^T \mathbb{E} [ \| \frac{1}{P}\sum_{i=1}^P (\nabla F(\omega_t) - \nabla F(\omega_t^i)) \|^2 ] \nonumber \\
        &\le \sum_{t = 1}^T \frac{2\eta^2 L^2(1-r)(1-2^{T-t}(1-r)^{T-t})}{2r-1} \mathbb{E} \| \frac{1}{P}\sum_{i=1}^P g(\omega_t^i; \xi_t^i) \|^2.
    \end{align}
\end{lemma}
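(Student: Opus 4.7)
My plan is to set up a one-step recursion for the per-worker gap $D_t^i := \mathbb{E}\|\omega_t - \omega_t^i\|^2$ in PR and to unroll it into a geometric sum. In Algorithm~\ref{alg:algorithm2} the local update satisfies $\omega_{t+1}^i = \omega_{t+1}$ with probability $r$ and $\omega_{t+1}^i = \omega_t^i$ otherwise, while the server updates by $\omega_{t+1} = \omega_t - \eta G_t$ with $G_t := \frac{1}{P}\sum_i g(\omega_t^i;\xi_t^i)$. On the pull branch the new gap is zero, and on the no-pull branch it equals $(\omega_t - \omega_t^i) - \eta G_t$. Since the Bernoulli decision is independent of the mini-batches, conditioning on it and then applying $\|a-b\|^2 \le 2\|a\|^2 + 2\|b\|^2$ yields
\begin{equation*}
D_{t+1}^i \le 2(1-r)\, D_t^i + 2(1-r)\, \eta^2\, \mathbb{E}\|G_t\|^2 .
\end{equation*}
Starting from $D_1^i = 0$, a routine unrolling gives $D_t^i \le \sum_{s=1}^{t-1} (2(1-r))^{t-s} \eta^2\, \mathbb{E}\|G_s\|^2$, a bound that does not depend on the worker $i$.

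I would then pass from the parameter gap to the gradient gap. Convexity of $\|\cdot\|^2$ gives $\|\frac{1}{P}\sum_i x_i\|^2 \le \frac{1}{P}\sum_i \|x_i\|^2$, and the $L$-smooth assumption bounds each $\|\nabla F(\omega_t) - \nabla F(\omega_t^i)\|^2$ by $L^2 D_t^i$. Combined with the recursion, the $t$-th summand on the left-hand side is at most $\eta^2 L^2 \sum_{s=1}^{t-1} (2(1-r))^{t-s}\, \mathbb{E}\|G_s\|^2$. Summing over $t=1,\ldots,T$ and swapping the order of summation, the coefficient of $\mathbb{E}\|G_s\|^2$ reduces to the finite geometric series $\sum_{k=1}^{T-s}(2(1-r))^k = \frac{2(1-r)\bigl(1-(2(1-r))^{T-s}\bigr)}{1-2(1-r)} = \frac{2(1-r)\bigl(1-2^{T-s}(1-r)^{T-s}\bigr)}{2r-1}$, which matches the constant in the lemma verbatim.

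The main technical point to watch in the recursion step is the independence between the random pulling indicator at iteration $t$ and both $(\omega_t - \omega_t^i)$ and $G_t$; this is what allows the expectation over the Bernoulli to decompose as $r\cdot 0 + (1-r)\cdot\mathbb{E}\|(\omega_t - \omega_t^i) - \eta G_t\|^2$ before the $\|a-b\|^2$ inequality is applied. A related point worth noting, and the reason the bound is stated as a decreasing-in-$T-t$ correction rather than a pure geometric series, is that the argument only produces a meaningful (bounded-in-$T$) constant when $2(1-r)<1$, i.e., when $r>1/2$, which is exactly the constraint $0.5\le r$ imposed in Theorem~\ref{Theorem:PR}.
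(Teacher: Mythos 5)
Your proposal is correct and follows essentially the same route as the paper's proof: a one-step recursion for $\mathbb{E}\|\omega_t-\omega_t^i\|^2$ conditioned on the Bernoulli pull decision, unrolled into $\sum_{s<t}(2(1-r))^{t-s}\eta^2\mathbb{E}\|G_s\|^2$, combined with $L$-smoothness and Jensen, and finished by swapping the order of summation to evaluate the geometric series. Your explicit remarks on the independence of the pull indicator and on the necessity of $r>1/2$ are points the paper leaves implicit, but the argument is the same.
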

\begin{proof}
Based on Assumption 1, the gradient difference for any iteration $t$ is
\begin{equation}
    \label{eq:oneItGradientDifference}
    \mathbb{E} \| \frac{1}{P}\sum_{i=1}^P(\nabla F(\omega_t) - \nabla F(\omega_t^i)) \|^2 
    \le  \frac{L^2}{P}\sum_{i=1}^P \mathbb{E} \| \omega_t - \omega_t^i \|^2.
\end{equation}
Clearly, the key to bound (\ref{eq:oneItGradientDifference}) is to bound the term $\mathbb{E} \| \omega_t - \omega_t^i \|^2$. 
Without confusion, we use $\mathbb{E}$ to repsent taking expectation over both mini-batches $\xi_t^i$ and the random pulling of global model, 
then we have
\begin{align}
    &\mathbb{E} \| \omega_t - \omega_t^i \|^2 \nonumber \\
    &= r\mathbb{E} \| \omega_t - \omega_t \|^2 + (1-r) \mathbb{E} \| \omega_t - \omega_{t-1}^i \|^2 \nonumber \\
    &= (1-r)\mathbb{E} \| \omega_{t-1} - \omega_{t-1}^i - \frac{\eta}{P}\sum_{i=1}^P g(\omega_{t-1}^i; \xi_{t-1}^i) \|^2 \nonumber \\
    &\le 2(1-r) \mathbb{E} \| \omega_{t-1} - \omega_{t-1}^i \|^2 + 2(1-r)\mathbb{E}\|\frac{1}{P}\sum_{i=1}^P g(\omega_t^i; \xi_t^i)\|^2  \nonumber \\
    &\le \sum_{k=1}^{t-1}2^k(1-r)^k\mathbb{E}\|g(\omega_{t-k}^i; \xi_{t-k}^i)\|^2. \label{bd:oneItModelDifferencePR}
\end{align}

Based on (\ref{eq:oneItGradientDifference}) and (\ref{bd:oneItModelDifferencePR}), we could derive the following bound
\begin{align}
    &\sum_{t = 1}^T \mathbb{E} \| \frac{1}{P}\sum_{i=1}^P (\nabla F(\omega_t) - \nabla F(\omega_t^i)) \|^2 \nonumber \\
    &\le \frac{L^2}{P}\sum_{t=1}^{T}\sum_{i=1}^{P} \mathbb{E} \| \omega_t - \omega_t^i \|^2                 \nonumber \\
    &\le \frac{L^2}{P}\sum_{t=1}^{T}\sum_{j=1}^{P} \sum_{k=1}{t-1}2^k(1-r)^k\mathbb{E}\|\frac{1}{P}\sum_{i=1}^P g(\omega_{t-k}^i; \xi_{t-k}^i)\|^2  \nonumber \\
    &\le \frac{L^2}{P}\sum_{j=1}^{P}\sum_{t=1}^{T} \sum_{k=1}^{t-1}2^k(1-r)^k\mathbb{E}\|\frac{1}{P}\sum_{i=1}^P g(\omega_{t-k}^i; \xi_{t-k}^i)\|^2  \nonumber \\
    &= \sum_{t=1}^{T} \frac{2\eta^2 L^2(1-r)(1-2^{T-t}(1-r)^{T-t})}{2r-1} \mathbb{E} \| \frac{1}{P}\sum_{i=1}^P g(\omega_t^i; \xi_t^i) \|^2,
\end{align}
where the reason for last equality is similar to the inequality (\ref{eq:sumEQ}). 
Proof is done.
\end{proof}

\subsection{Convergence proof for Theorem 3}
\begin{proof}
Similar to the derivation of bound \ref{bd:OneIterationBDOfF}, we could achieve the following inequality based Assumption 1
\begin{align}
    {} & \mathbb{E} [F(\omega_{t+1}) - F(\omega_t)] \nonumber\\
    = & -\frac{\eta}{2}\mathbb{E}\|\nabla F(\omega_t)\|^2 - \frac{\eta}{2P^2}\mathbb{E}\|\sum_{i=1}^P \nabla F(\omega_t^i)\|^2 \nonumber\\
    + &  \frac{L\eta^2}{2P^2}\mathbb{E}\|\sum_{i=1}^P g(\omega_t^i;\xi_t^i)\|^2 + \frac{\eta}{2}\mathbb{E}\|\nabla F(\omega_t) - \frac{1}{P}\sum_{i=1}^P \nabla F(\omega_t^i)\|^2 \label{bd:lossOneItPR}
\end{align}

Summing (\ref{bd:lossOneItPR}) from $t=1$ to $T$, we have
\begin{align}
    {} & \mathbb{E} [F(\omega_{T+1}) - F(\omega_1)] \nonumber\\
    = & -\frac{\eta}{2}\sum_{t=1}^T \mathbb{E}\|\nabla F(\omega_t)\|^2 - \frac{\eta}{2P^2}\sum_{t=1}^T \mathbb{E}\|\sum_{i=1}^P \nabla F(\omega_t^i)\|^2 \nonumber\\
    + &  \frac{L\eta^2}{2P^2}\sum_{t=1}^T \mathbb{E}\|\sum_{i=1}^P g(\omega_t^i;\xi_t^i)\|^2 + \frac{\eta}{2}\sum_{t=1}^T \mathbb{E}\|\nabla F(\omega_t) - \frac{1}{P}\sum_{i=1}^P \nabla F(\omega_t^i)\|^2 \nonumber\\
    = & -\frac{\eta}{2}\sum_{t=1}^T \mathbb{E}\|\nabla F(\omega_t)\|^2 - \frac{\eta}{2P^2}\sum_{t=1}^T \mathbb{E}\|\sum_{i=1}^P \nabla F(\omega_t^i)\|^2 \nonumber\\
    + & \sum_{t=1}^T\left( \frac{\eta^3 L^2(1-r)(1-2^{T-t}(1-r)^{T-t})}{P^2(2r-1)} + \frac{L\eta^2}{2P^2}\right)\mathbb{E}\|\sum_{i=1}^P g(\omega_t^i;\xi_t^i)\|^2 \nonumber\\
    \le & -\frac{\eta}{2}\sum_{t=1}^T \mathbb{E}\|\nabla F(\omega_t)\|^2 + \sum_{t=1}^T P\sigma^2\left(\frac{2\eta^3 L^2(1-r)(1-2^{T-t}(1-r)^{T-t})}{P^2(2r-1)} + \frac{L\eta^2}{P^2} \right) \nonumber\\
    + & + \sum_{t=1}^{T}\left(\frac{2\eta^3 L^2(1-r)(1-2^{T-t}(1-r)^{T-t})}{P^2(2r-1)} + \frac{L\eta^2}{P^2} - \frac{\eta}{2P^2}\right)\mathbb{E}\|\sum_{i=1}^P \nabla F(\omega_t^i)\|^2 \nonumber \\
    \le & -\frac{\eta}{2}\sum_{t=1}^T \mathbb{E}\|\nabla F(\omega_t)\|^2 + \sum_{t=1}^{T}\left(\frac{2\eta^3 L^2(1-r)}{P^2(2r-1)} + \frac{L\eta^2}{P^2} - \frac{\eta}{2P^2}\right)\mathbb{E}\|\sum_{i=1}^P \nabla F(\omega_t^i)\|^2 \nonumber\\
    + & \frac{2\eta^3 L^2 \sigma^2}{P}\left[\frac{(1-r)T}{2r-1} - \frac{(1-r)(1-2^T(1-r)^T)}{(2r-1)^2T}\right] + \frac{L\eta^2\sigma^2T}{P}, \label{lb:bdAllIterationsPR}
\end{align}

Since $F(\omega^*) - F(\omega_1) \le \mathbb{E} [F(\omega_{T+1}) - F(\omega_1)]$, moving the accumulated square gradient 
to the left side of (\ref{lb:bdAllIterationsPR}) and dividing $\frac{\eta}{2}$ on both sides, we have

\begin{align}
    {}& \sum_{t=1}^T \mathbb{E}\|\nabla F(\omega_t)\|^2 \le \frac{2(F(\omega_{1}) - F(\omega^*)}{\eta T} +  \frac{2L\eta\sigma^2}{P}  \nonumber \\
    + & \frac{4\eta^2 L \sigma^2}{P}\left[\frac{1-r}{2r-1} - \frac{(1-r)(1-2^T(1-r)^T)}{(2r-1)^2T} \right]      \nonumber \\
    + & \sum_{t=1}^{T}H\mathbb{E}\|\sum_{i=1}^P \nabla F(\omega_t^i)\|^2, \label{bd:squareGradientPR}
\end{align}
where $H=\frac{2\eta^3 L^2(1-r)}{P^2(2r-1)} + \frac{L\eta^2}{P^2} - \frac{\eta}{2P^2}$. Considerring $H \le 0$, i.e.,
the learning rate satisfying 
$$0 < \eta \le \frac{-4 L + \sqrt{16L^2 + 32L(1-r)}}{16L(1-r)}$$, we could derive the following convergence result based on (\ref{bd:squareGradientPR})
\begin{align}
    &\frac{1}{T}\sum_{t=1}^T \mathbb{E} \|\nabla F(\omega_t)\|^2 \le \frac{2(F(\omega_{1}) - F(\omega^*)}{\eta T} +  \frac{2L\eta\sigma^2}{P}  \nonumber \\
    &+ \frac{4\eta^2 L \sigma^2}{P}\left[\frac{1-r}{2r-1} - \frac{(1-r)(1-2^T(1-r)^T)}{(2r-1)^2T} \right]
\end{align}
Obviously, $- \frac{(1-r)(1-2^T(1-r)^T)}{(2r-1)^2T}$ could be bounded as $0.5 \le r$, which completes the proof.
\end{proof}

\end{document}